\documentclass[10pt]{article} 
\usepackage[accepted]{tmlr}


\usepackage[utf8]{inputenc}
\usepackage[T1]{fontenc}
\usepackage{url}
\usepackage{hyperref}
\usepackage{thmtools,thm-restate}
\usepackage{enumitem}
\usepackage{graphicx}
\usepackage{dcolumn}
\usepackage{bm}
\usepackage{amsmath}
\usepackage{comment} 
\usepackage{amssymb}
\usepackage{mathrsfs} 
\usepackage[normalem]{ulem}
\usepackage{bbm}
\usepackage{caption}
\usepackage{subfigure}
\usepackage{mathtools}
\usepackage{csquotes}
\usepackage{soul}
\usepackage[ruled,vlined]{algorithm2e}

\renewcommand{\doteq}{\vcentcolon=}
\newcommand{\x}{{\bm x}}
\newcommand{\J}{{\bm J}}
\newcommand{\h}{{\bm h}}
\newcommand{\G}{{\cal G}}
\newcommand{\V}{{\cal V}}
\newcommand{\E}{{\cal E}}
\newcommand{\B}{{\cal B}}

\usepackage{cancel}
\usepackage{rotating}
\usepackage{listings}
\usepackage{tagging}
\usepackage{hyphenat}
\usepackage{algorithmic}
\usepackage{multirow}
\usepackage{amsthm}
\theoremstyle{definition}
\newtheorem{theorem}{Theorem}[section]

\newtheorem{lemma}[theorem]{Lemma}
\theoremstyle{definition}

\newtheorem*{exercise*}{Exercise}
\theoremstyle{remark}

\newtheorem*{remark}{Remark}

\graphicspath{{figs/}} 

\title{Exact Fractional Inference via Re-Parametrization \& Interpolation between Tree-Re-Weighted- and Belief Propagation- Algorithms}


\author{\name Hamidreza Behjoo \email hamidreza.behjoo@gmail.com \\
      University of Arizona
      \ANDD
      \name Michael Chertkov \email chertkov@arizona.edu \\
      University of Arizona
      }



\begin{document}

\maketitle

\begin{abstract}
Computing the partition function, $Z$, of an Ising model over a graph of $N$ \enquote{spins} is most likely exponential in $N$. Efficient variational methods, such as Belief Propagation (BP) and Tree Re-Weighted (TRW) algorithms, compute $Z$ approximately by minimizing the respective (BP- or TRW-) free energy. We generalize the variational scheme by building a $\lambda$-fractional interpolation, $Z^{(\lambda)}$, where $\lambda=0$ and $\lambda=1$ correspond to TRW- and BP-approximations, respectively.  
This fractional scheme -- coined Fractional Belief Propagation (FBP) -- guarantees that in the attractive (ferromagnetic) case $Z^{(TRW)} \geq Z^{(\lambda)} \geq Z^{(BP)}$, and there exists a unique (\enquote{exact}) $\lambda_*$ such that $Z=Z^{(\lambda_*)}$. Generalizing the re-parametrization approach of \citep{wainwright_tree-based_2002} and the loop series approach of \citep{chertkov_loop_2006}, we show how to express $Z$ as a product, $\forall \lambda:\ Z=Z^{(\lambda)}{\tilde Z}^{(\lambda)}$, where the multiplicative correction, ${\tilde Z}^{(\lambda)}$, is an expectation over a node-independent probability distribution built from node-wise fractional marginals. Our theoretical analysis is complemented by extensive experiments with models from Ising ensembles over planar and random graphs of medium and large sizes. Our empirical study yields a number of interesting observations, such as  the ability to estimate ${\tilde Z}^{(\lambda)}$ with  $O(N^{2::4})$ fractional samples and suppression of variation in $\lambda_*$ estimates with an increase in $N$ for instances from a particular random Ising ensemble, where $[2::4]$ indicates a range from $2$ to $4$. We also verify and discuss the applicability of this approach to the problem of image de-noising. Based on these experiments and the theory, we conclude that both the computation of the partition function and the computation of marginals can be efficiently performed via FBP at the optimal value of $\lambda_*$.
\end{abstract}

\section{Introduction}

Graphical Models (GM) are a major tool of Machine Learning that allow expressing complex statistical correlations via graphs. Ising models are  the most widespread GM  for expressing correlations between binary variables associated with nodes of a graph, where the probability is factorized into a product of terms, each associated with an undirected edge of the graph. Many methods of inference and learning in GM are first tested on Ising models and then generalized, , for example, beyond binary and pair-wise assumptions. 

In this manuscript, we focus on computing the normalization factor $Z$ (called the partition function), over the Ising models. The problem is known to be of sharp-P complexity, likely requiring computational efforts that are exponential in the size (number of nodes, $N$) of the graph \citep{welsh_computational_1991,jerrum_polynomial-time_1993, Goldberg2015, Barahona_1982}. There are three general approximate methods to compute $Z$: (a) elimination of (summation over) the variables one-by-one \citep{dechter_bucket_1999,dechter_mini-buckets_2003, liu_bounding_2011,ahn_bucket_2018}; (b) the variational approach \citep{yedidia_bethe_2001,yedidia_constructing_2005}; (c) Monte Carlo (MS) sampling \citep{andrieu_introduction_2003}. (See also reviews \citep{wainwright_graphical_2007,chertkov_inferlo_2023} and references therein.) In this manuscript, we develop the latter two methods. We also pay special attention to providing and tightening approximation guarantees. We base our novel theory and algorithm on the provable upper bound for $Z$ associated with the so-called Tree Re-Weighted (TRW) variational approximation \citep{wainwright_tree-reweighted_2003,wainwright_new_2005} and on the Belief Propagation (BP) variational approximation \citep{yedidia_bethe_2001,yedidia_constructing_2005}, which is known to provide a lower bound on $Z$ in the case of an attractive (ferromagnetic) Ising model \citep{ruozzi_bethe_2012}. Note that there are also additional upper bounds derived from log-determinant relaxations, wherein binary graphical models are relaxed to Gaussian graphical models on the same graph \citep{Wainwright_logdet, NIPS2008_Ghaoui}. However, these bounds are generally considered to be loose, and to the best of our knowledge, there is no known method to effectively narrow the gap between these upper bounds and the exact partition function.

\subsection{Relation to Prior Work} 

In addition to the aforementioned relations to foundational work on the variational approaches \citep{yedidia_bethe_2001,yedidia_constructing_2005}, MCMC approaches \citep{andrieu_introduction_2003}, and lower and upper variational bounds \citep{ruozzi_bethe_2012,wainwright_tree-reweighted_2003}, this manuscript also builds on recent results in other related areas, in particular:
\begin{itemize}
    \item  We extend the ideas of parameterized interpolation between BP \citep{yedidia_bethe_2001,yedidia_constructing_2005} and TRW \citep{wainwright_tree-reweighted_2003,wainwright_new_2005}, in the spirit of fractional BP \citep{wiegerinck_fractional_2002,chertkov_approximating_2013}, thus introducing a broader family of variational approximations.
    
    \item Expanding on the previous point, since our approach can be considered as  an interpolation bridging the TRW and BP approximations for the GM's partition function, it seems appropriate to cite \citep{knoll_self-guided_2023}, where another interpolation approach was unveiled. The  interpolation discussed in \citep{knoll_self-guided_2023} is of an annealing type -- it starts with the trivial (high-temperature) model where all components are independent, and then the potentials of the pair-wise model are tuned gradually, adjusting the scaling parameter from 0 to 1. At every incremental step, a BP algorithm is employed to track the fixed point, ensuring the method's precision and reliability. The primary objective of this approach was to surpass the conventional BP in both accuracy and convergence rates. In contrast, our approach is designed to ascertain the exact value of the partition function and marginals, thus setting a new standard for precision in the analysis of complex systems.
    
    \item  We utilize and generalize re-parametrization \citep{wainwright_tree-based_2002}, gauge transformation, and loop calculus \citep{chertkov_loop_2006,chertkov_loop_2006-1,chertkov_gauges_2020} techniques, as well as the combination of the two \citep{sudderth_loop_2007}.
    
    \item Our approach is also related to the development of MCMC techniques with polynomial guarantees, the so-called Fully Randomized Polynomial Schemes (FPRS), developed specifically for Ising models of specialized types, e.g., attractive \citep{jerrum_polynomial-time_1993} and zero-field, planar \citep{gomez_approximate_2009,ahn_synthesis_2016}.
    
\end{itemize}

\subsection{This Manuscript's Contribution}

We introduce a fractional variational approximation that interpolates between the classical Tree Re-Weighted (TRW) and Belief Propagation (BP) cases. The fractional free energy, $\bar{F}^{(\lambda)} = -\log Z^{(\lambda)}$, defined as the negative logarithm of the fractional approximation to the exact partition function, $Z = \exp(-\bar{F})$, requires solving an optimization problem, which is achieved practically by running a fractional version of one of the standard message-passing algorithms. The parameter $\lambda \in [0,1]$ interpolates between the $\lambda = 1$ and $\lambda = 0$ cases, corresponding to BP and TRW, respectively. The interpolation technique, particularly our focus on $\lambda_*$, which lies somewhere between $\lambda = 0$ and $\lambda = 1$ and for which $Z^{(\lambda_*)} = Z$, is novel -- to the best of our knowledge, this is the first manuscript where the interpolation technique is discussed. Basic definitions, including problem formulation for the Ising models and variational formulation in terms of the node and edge beliefs (proxies for the respective exact marginal probabilities), are given in Section \ref{sec:TechPrel}. Assuming that the fractional message-passing algorithm converges, we study the dependence of the fractional free energy on the parameter $\lambda$ and the relationship between the exact value of the free energy (the negative logarithm of the exact partition function) and the fractional free energy. We report the following theoretical results:
\begin{itemize}
    \item We show in Section \ref{sec:free-energy} that under some mild asumptions $\bar{F}^{(\lambda)}$ is a continuous and monotone function of $\lambda$ (Theorem \ref{th:monotone} proved in Appendix \ref{sec:monotone-proof}), which is also concave in $\lambda$ (Theorem \ref{th:convexity}).
    
    \item Our main theoretical result, Theorem \ref{th:exact}, presented in Section \ref{sec:Re-Param-Exact} and proven in Appendix \ref{sec:exact}, states that the exact partition function can be expressed as a product of the variational free energy and a multiplicative correction, $Z = Z^{(\lambda)}{\tilde Z}^{(\lambda)}$. The latter multiplicative correction term, ${\tilde Z}^{(\lambda)}$, is stated as an explicit expectation of an expression over a well-defined \enquote{mean-field} probability distribution, where both the expression and the \enquote{mean-field} probability distribution are stated explicitly in terms of the fractional node and edge beliefs. We note that such a bridge between the exact partition function and the approximate partition function, known as the Loop Series/Calculus, was introduced in \cite{chertkov_loop_2006,chertkov_loop_2006-1} and elaborated upon in \cite{sudderth_loop_2007} for the case of the Bethe (Belief Propagation), where $\lambda = 1$. However, to the best of our knowledge, it has not been reported in the literature for any other values of $\lambda \in [0,1]$ interpolating between BP and TRW, particularly for $\lambda = 0$ corresponding to TRW.
\end{itemize} 
The theory is extended with experiments reported in Section \ref{sec:experiments}. Here we show, in addition to confirming our theoretical statements (and thus validating our simulation settings), that:
\begin{itemize}
   
    \item Evaluating $Z^{(\lambda)}{\tilde Z}^{(\lambda)}$ at different values of $\lambda$ and confirming that the result is independent of $\lambda$ suggests a novel approach to a reliable and efficient estimate of the exact $Z$ -- the Fractional Message Passing Algorithm \ref{alg:fractionaMP}.
    
    \item Analyzing ensembles of the attractive Ising models over graphs of size $N$, we observe that fluctuations of the value of $\lambda_*$ within the ensemble, where $Z^{(\lambda_*)} = Z$, decrease dramatically with an increase in $N$. This observation suggests that estimating $\lambda_*$ for an instance from the ensemble allows efficient approximate evaluation of $Z$ for any other instance from the ensemble.
    
    \item  Studying the importance-sampling procedure based on TRW BP, we empirically estimate ${\tilde Z}^{(\lambda)}$ and observe convergence with the number of samples. Analyzing the speed of convergence with the number of nodes, $N$, in the Ising model, our experiments show that the correct mean of ${\tilde Z}^{(\lambda_*)}$, evaluated at $\lambda = \lambda_*$, is recovered with $N^2$ samples. Additionally, the variance of the respective importance sampling is reduced beyond a pre-fixed $\mathcal{O}(1)$ tolerance with $N^4$ samples.

    \item Analysis of mixed Ising ensembles (where attractive and repulsive edges alternate) suggests that for instances with sufficiently many repulsive edges, finding, $\lambda_* \in [0,1]$ may not be feasible.

     \item We demonstrate the effectiveness of our approach in the context of image denoising, as detailed in Section \ref{sec:denoising}. This application highlights the practical utility of our theoretical developments in a real-world machine learning task. Furthermore, the denoising  experiments support the conjecture that the optimal $\lambda_*$ is special not only because the Fractional Belief Propagation (FBP) at this value yields the exact partition function, but also because estimating marginal probabilities at $\lambda_*$ is significantly more accurate. The error in these estimates decreases as the size of the problem increases, underscoring the robustness and scalability of our method.

\end{itemize}

We have a brief discussion of conclusions and paths forward in Section \ref{sec:conclusions}.

\section{Technical Preliminaries}\label{sec:TechPrel}

\subsection{Ising Models: the formulation} 

Graphical Models (GM) are the result of a marriage between probability theory and graph theory designed to express a class of high-dimensional probability distributions that factorize in terms of products of lower-dimensional factors. The Ising model is an exemplary GM defined over an undirected graph, $\G=(\V,\E)$. The Ising Model is stated in terms of binary variables, $x_a=\pm 1$, and singleton factors, $h_a\in \mathbb{R}$, associated with nodes of the graph, $a\in\V$, and pair-wise factors, $J_{ab}\in \mathbb{R}$, associated with edges of the graph, $\{a,b\}\in{\cal E}$. The probability distribution of the Ising model observing a state, $\x =(x_a|a\in\V)$ is 
\begin{align} \label{eq:Z}
	p({\bm x}|{\bm J},{\bm h}) & =\frac{\exp\left(-E\left({\bm x};{\bm J},{\bm h}\right)\right)}{Z({\bm J},{\bm h})},\quad Z({\bm J},{\bm h})  \doteq \sum\limits_{\x\in \{\pm 1\}^{|\V|}}\exp\left(-E\left({\bm x};{\bm J},{\bm h}\right)\right),\\ \label{eq:E}
	E\left({\bm x};{\bm J},{\bm h}\right) &\doteq \sum\limits_{\{a,b\}\in\E} E_{ab}(x_a,x_b),\quad 
	\forall \{a,b\}\in \E: \ E_{ab}(x_a,x_b)= -J_{ab}x_a x_b-h_a x_a{/d_a}+h_b x_b{/d_b},
\end{align}
where $d_a$ is the degree of node $a$ in ${\cal G}$ and $\J\doteq (J_{ab}|\{a,b\}\in\E),\ \h=(h_a|a\in\V)$ are the pair-wise and singleton vectors, assumed given. $E(\x;\J,\h)$ is the energy function and $Z(\J,\h)$ is the partition function. Solving the Ising model inference problem means computing $Z$ which generally requires an effort that is exponential in $N=|\V|$.

\subsection{Exact Variational Formulation}

Exact variational approach to compute $Z$ consists in restating Eq.~(\ref{eq:Z}) in terms of the following Kullback-Leibler distance between $\exp(-E(\x;\J,\h))$ and a probability distribution, $\B : \{-1,1\}^{|\V|} \to [0,1],\ \sum_\x \B(\x)=1$, called belief: 
\begin{gather}\label{eq:F}
	\hspace{-0.2cm}\bar{F} = -\log Z = \min\limits_{\B(\x)} \sum\limits_{\x}\left(E(\x)\B(\x) - \B(\x)\log \B(\x)\right),
\end{gather}
where $\bar{F}$ is also called the free energy (following widely accepted physics terminology).

The exact variational formulation (\ref{eq:F}) is the starting point for approximate variational formulations, such as BP \citep{yedidia_constructing_2005} and TRW \citep{wainwright_graphical_2007}, stated solely in terms of the marginal beliefs associated with nodes and edges, respectively:
\begin{align}\label{eq:Ba}
	\forall a\in\V,\ \forall x_a: \ \B_a(x_a)\doteq \sum\limits_{\x\setminus x_a} \B(\x),\quad \forall \{a,b\}\in\E,\ \forall x_a,x_b:\ \B_{ab}(x_a,x_b)\doteq \sum\limits_{\x\setminus (x_a,x_b)} \B(\x).
\end{align}
Moreover, the \emph{fractional} approach developed in this manuscript provides a variational formulation in terms of the marginal probabilities, generalizing (and, in fact, interpolating between) the respective BP and TRW approaches. Therefore, we now turn to stating the fractional variational formulation.

\subsection{Fractional Variation Formulation}\label{sec:fractional}

Let us introduce a fractional-, or $\lambda$- \emph{reparametrization} of the belief (proxy for the probability distribution of $\x$)
\begin{gather}\label{eq:joint-belief}
	{\cal B}^{(\lambda)}({\bm x}) = \frac{\prod_{\{a,b\}\in{\cal E}}\left({\cal B}_{ab}(x_a,x_b)\right)^{\rho^{(\lambda)}_{ab}}}{\prod_{a\in{\cal V}} \left({\cal B}_a(x_a)\right)^{\sum_{b\sim a}\rho^{(\lambda)}_{ab}-1}},
\end{gather}
where $b\sim a$ is a shorthand notation for $b\in \V$ such that, given $a\in\V$, $\{a,b\}\in\E$. Here in Eq.~(\ref{eq:joint-belief}),
$\rho^{(\lambda)}_{ab}$ is the $\lambda$-parameterized edge appearance probability 
\begin{equation}\label{eq:rho-lambda}
	\rho^{(\lambda)}_{ab} = \rho_{ab} + \lambda (1- \rho_{ab}), \quad (\lambda:  0\to 1),  
\end{equation}
which is expressed via the $\lambda=0$ edge appearance probability, $\rho_{ab}$, dependent on the weighted set of spanning trees, ${\cal T}\doteq \{T\}$, of the graph according to the following TRW rules \citep{wainwright_stochastic_2002,wainwright_graphical_2007}:
\begin{gather}\label{eq:rho}
	\forall \{a,b\}\in \V:\ \rho_{ab}=\!\!\!\sum_{T\in{\cal T},\text{ s.t. } \{a,b\}\in T} \!\!\!\rho_T,\ \sum_{T\in {\cal T}} \rho_T=1.
\end{gather}

Several remarks are in order. First, $\lambda=1$ corresponds to the case of BP. Then Eq.~(\ref{eq:joint-belief}) is exact in the case of a tree graph, but it can also be considered as a (loopy) BP approximation in general. Second, and as mentioned above, $\lambda=0$, corresponds to the case of TRW. Third, the newly introduced (joint) beliefs are not globally consistent, i.e. $\sum_{\x}\B^{(\lambda)}(\x)\neq 1$ for any $\lambda$, including the $\lambda=0$ (TRW) and $\lambda=1$ (BP) cases.

Substituting Eq.~(\ref{eq:joint-belief}) into Eq.~(\ref{eq:F}) we arrive at the following fractional approximation to the exact free energy stated as an optimization over all the node and edge marginal beliefs, ${\cal \bm B}\doteq ({\cal B}_{ab}(x_a,x_b)|\forall \{a,b\}\in{\cal E},\ x_a,x_b=\pm 1)\cup ({\cal B}_{a}(x_a)|\forall a\in{\cal V},\ x_a=\pm 1)$:
\begin{eqnarray}
	\label{eq:Ff} 
	&& \hspace{-0.7cm}\bar{F}^{(\lambda)}\doteq  { F^{(\lambda)}({\cal \bm B}^{(\lambda)}),\quad  {\cal \bm B}^{(\lambda)}\doteq \text{arg}}\min\limits_{{\cal \bm B}\in {\cal D}} F^{(\lambda)}({\cal \bm B}), \\ \nonumber && \hspace{-0.7cm}
	\ F^{(\lambda)}({\cal \bm B}) \doteq E({\cal \bm B})-H^{(\lambda)}({\cal \bm B}), \quad E({\cal \bm B}) \doteq \sum\limits_{\{a,b\}\in{\cal E}} \sum\limits_{x_a,x_b=\pm 1} E_{ab}(x_a,x_b){\cal B}_{ab}(x_a,x_b),\\ \nonumber \label{eq:H}
	&& \hspace{-0.7cm} H^{(\lambda)} ({\cal \bm B}) \doteq - \sum\limits_{\{a,b\}\in{\cal E}} \rho^{(\lambda)}_{ab} \sum\limits_{x_a,x_b=\pm 1} {\cal B}_{ab}(x_a,x_b)\log {\cal B}_{ab}(x_a,x_b) + \sum\limits_{a\in{\cal V}} \left(\sum_{b\sim a}\rho^{(\lambda)}_{ab} - 1\right)\sum\limits_{x_a=\pm 1}{\cal B}_{a}(x_a)\log {\cal B}_{a}(x_a),\\
	\label{eq:D} 
    &&\hspace{-0.7cm} {\cal D} \doteq \left({\cal \bm B}\left|\begin{array}{lr} 
		{\cal B}_a(x_a)=\sum_{x_b=\pm 1}{\cal B}_{ab}(x_a,x_b), & \\ \hspace{0.2cm}\forall a\in{\cal V},\ \forall b\sim a,\ \forall x_a=\pm 1; & (a)\\
		\sum_{x_a,x_b=\pm 1}{\cal B}_{ab}(x_a,x_b)=1, &\\ \hspace{0.2cm}\forall \{a,b\}\in{\cal E}; & (b)\\
		{\cal B}_{ab}(x_a,x_b)\geq 0, & \\ \hspace{0.2cm} \forall \{a,b\}\in{\cal E},\ \forall x_a,x_b=\pm 1.& (c)
	\end{array}\right.\right).
\end{eqnarray}
As discussed in Section \ref{sec:free-energy} in detail, $\lambda=0$ results in $Z^{(\lambda)}$ which upper bounds the exact $Z$, and $\lambda=1$ results in a lower bound if the model is attractive.

The optimization over beliefs in Eq.~(\ref{eq:Ff}) can be restated in the Lagrangian form (see Appendix \ref{sec:Lagrangian}). Fixed points of the Lagrangian (potentially many) satisfy the so-called message-passing equations (see Appendix \ref{sec:MP}).  We will refer to the iterative algorithm that finds marginal probabilities ${\cal B}$ and respective message variables $\mu$ by solving these equations as the basic Fractional Belief Propagation (FBP) algorithm. Consistent with the equations from Appendices~\ref{sec:Lagrangian} and \ref{sec:MP}, the resulting messages can then be used to find the FBP approximation, $Z^{(\lambda)}$, for the partition function $Z$ as follows:
\begin{align}\label{eq:Zf} &
	Z^{(\lambda)} = \exp\left(-\bar{F}^{(\lambda)}\right) = \exp\left(-F^{(\lambda)}({\cal B}^{(\lambda)})\right) = \prod\limits_{\{a,b\}\in\E} \Bigg(\sum\limits_{x_a,x_b} \exp\left(-\frac{E_{ab}(x_a,x_b)}{\rho^{(\lambda)}_{ab}}\right) \times \\ \nonumber & 
	\left(\mu^{(\lambda)}_{b\to a}(x_a)\right)^{\frac{\sum\limits_{c\sim a} \rho^{(\lambda)}_{ac}-1}{\rho^{(\lambda)}_{ab}}} \left(\mu^{(\lambda)}_{a\to b}(x_b)\right)^{\frac{\sum\limits_{c\sim b} \rho^{(\lambda)}_{bc}-1}{\rho^{(\lambda)}_{ab}}} \Bigg)^{\rho^{(\lambda)}_{ab}} \prod\limits_{a\in\V} \left(\sum\limits_{x_a} \prod\limits_{b\sim a} \mu^{(\lambda)}_{b\to a}(x_a) \right)^{1-\sum\limits_{c\sim a} \rho^{(\lambda)}_{ac}},
\end{align}
where ${\cal \bm B}^{(\lambda)}$ is defined in Eq.~(\ref{eq:Ff}) and its components are expressed via $\mu^{(\lambda)}$ and $\rho^{\lambda}$ according to Eqs.~(\ref{eq:Ba-sol},\ref{eq:Bab-sol}).

\section{Properties of the Fractional Free Energy}\label{sec:free-energy}

Given the construction of the fractional free energy, described above in Section \ref{sec:fractional} and also detailed in Appendix \ref{sec:fractional-details}, we are ready to make the following statements. 
\begin{theorem}{[Monotonicity of the Fractional Free Energy]} \label{th:monotone}
	Assuming ${\bm \rho}\doteq (\rho_{ab}|\{a,b\}\in\E)$ is fixed, and ${\cal \bm B}^{(\lambda)}$ is differentiable in $\lambda$, $\bar{F}^{(\lambda)}$ is a continuous, monotone function of $\lambda$. 
\end{theorem}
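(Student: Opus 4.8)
The key idea is that $\bar F^{(\lambda)}$ is a minimum over the compact convex polytope $\mathcal D$ of the family of functions $\lambda \mapsto F^{(\lambda)}(\mathcal{\bm B})$, and this family depends on $\lambda$ only through the entropy term $H^{(\lambda)}$, which in turn depends on $\lambda$ only through the coefficients $\rho^{(\lambda)}_{ab} = \rho_{ab} + \lambda(1-\rho_{ab})$ — and each of these is an affine (hence monotone nondecreasing, since $1-\rho_{ab}\ge 0$) function of $\lambda$. Let me sketch the steps.

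First I would establish continuity. Since $F^{(\lambda)}(\mathcal{\bm B})$ is jointly continuous in $(\lambda,\mathcal{\bm B})$ on $[0,1]\times\mathcal D$ (the entropy terms $x\log x$ extend continuously to $x=0$, and the coefficients are polynomials in $\lambda$), and $\mathcal D$ is compact, a standard maximum-theorem / uniform-continuity argument gives that $\bar F^{(\lambda)} = \min_{\mathcal{\bm B}\in\mathcal D} F^{(\lambda)}(\mathcal{\bm B})$ is continuous in $\lambda$. Concretely, for $\lambda,\lambda'$ close, $|F^{(\lambda)}(\mathcal{\bm B}) - F^{(\lambda')}(\mathcal{\bm B})| \le |\lambda - \lambda'|\cdot C$ uniformly over $\mathcal D$, where $C$ bounds the relevant edge-local entropy expressions (these are bounded since beliefs lie in $[0,1]$), and this uniform bound transfers to the minimum.

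Next, monotonicity. I would show $\frac{\partial}{\partial\lambda} F^{(\lambda)}(\mathcal{\bm B}) \ge 0$ pointwise for every fixed $\mathcal{\bm B}\in\mathcal D$, i.e. that $F^{(\lambda)}$ is nondecreasing in $\lambda$ for each fixed belief. Indeed $\partial_\lambda F^{(\lambda)} = -\partial_\lambda H^{(\lambda)}$, and since $\partial_\lambda \rho^{(\lambda)}_{ab} = 1-\rho_{ab}\ge 0$ (as $\rho_{ab}\in[0,1]$), one gets
\begin{align}
\partial_\lambda F^{(\lambda)}(\mathcal{\bm B}) = \sum_{(a,b)\in\E}(1-\rho_{ab})\Big(&-\!\!\sum_{x_a,x_b}\!\mathcal B_{ab}\log\mathcal B_{ab} + \sum_{x_a}\mathcal B_a\log\mathcal B_a + \sum_{x_b}\mathcal B_b\log\mathcal B_b\Big). \nonumber
\end{align}
The bracketed quantity is exactly the mutual-information-type term $I(x_a;x_b)$ associated with the edge marginal $\mathcal B_{ab}$ relative to its own single-node marginals $\mathcal B_a,\mathcal B_b$ (using the marginalization constraint (a) in $\mathcal D$), and mutual information is nonnegative. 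Hence $\partial_\lambda F^{(\lambda)}(\mathcal{\bm B})\ge 0$ everywhere on $\mathcal D$. Since taking the pointwise minimum over a fixed set preserves the property of being nondecreasing in a parameter (if $g_\lambda \ge g_{\lambda'}$ pointwise for $\lambda\ge\lambda'$, then $\inf g_\lambda \ge \inf g_{\lambda'}$), we conclude $\bar F^{(\lambda)}$ is monotone nondecreasing in $\lambda$, which (together with $\bar F = -\log Z$) gives the claimed $Z^{(TRW)} \ge Z^{(\lambda)} \ge Z^{(BP)}$ ordering.

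The main obstacle I anticipate is not the monotonicity inequality itself — which reduces cleanly to nonnegativity of mutual information — but handling the regularity carefully: the minimizer $\mathcal{\bm B}^{(\lambda)}$ need not be unique (the problem need not be convex for general $\rho$), so I cannot simply differentiate $\bar F^{(\lambda)}$ via an envelope formula at an isolated minimizer. The robust route is the pointwise-minimum argument above, which needs only the pointwise monotonicity of $F^{(\lambda)}(\mathcal{\bm B})$ in $\lambda$ and sidesteps differentiability of $\bar F^{(\lambda)}$ entirely; continuity is then obtained separately by the compactness/uniform-bound argument rather than from any smoothness of the optimizer. A minor technical point to be careful about is that boundary beliefs (some $\mathcal B_{ab}=0$) must be included, but there the $x\log x$ terms vanish by the usual convention and all expressions remain well defined and continuous.
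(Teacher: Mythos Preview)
Your approach is correct and takes a genuinely different route from the paper. The paper differentiates the value function $\bar F^{(\lambda)}=F^{(\lambda)}(\mathcal{\bm B}^{(\lambda)})$ directly via an envelope-theorem argument: it writes out the chain rule and uses stationarity of the Lagrangian at the optimal beliefs $\mathcal{\bm B}^{(\lambda)}$ to kill the terms involving $d\mathcal{\bm B}^{(\lambda)}/d\lambda$, leaving only the explicit $\lambda$-dependence through $\rho^{(\lambda)}_{ab}$, which produces the same mutual-information expression you found. Your pointwise-monotonicity argument ($F^{(\lambda)}(\mathcal{\bm B})$ nondecreasing in $\lambda$ for every fixed $\mathcal{\bm B}\in\mathcal D$, hence so is the infimum over $\mathcal D$) is more elementary and, as you rightly anticipate, more robust: it needs neither differentiability of the minimizer in $\lambda$ nor its uniqueness, both of which the paper's envelope computation implicitly assumes. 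What the paper's route buys is an explicit formula for $d\bar F^{(\lambda)}/d\lambda$ at the optimum, which it then reuses in the convexity proof (Theorem~\ref{th:convexity}); your argument gives monotonicity and continuity without ever asserting that $\bar F^{(\lambda)}$ is differentiable.

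One small slip to clean up in your displayed computation: the bracket you wrote,
\[
-\sum_{x_a,x_b}\mathcal B_{ab}\log\mathcal B_{ab}+\sum_{x_a}\mathcal B_a\log\mathcal B_a+\sum_{x_b}\mathcal B_b\log\mathcal B_b,
\]
equals $H_{ab}-H_a-H_b=-I_{ab}\le 0$, not $+I_{ab}$; correspondingly, what you displayed is $\partial_\lambda H^{(\lambda)}$, not $\partial_\lambda F^{(\lambda)}=-\partial_\lambda H^{(\lambda)}$. The two sign flips cancel, so your conclusion $\partial_\lambda F^{(\lambda)}(\mathcal{\bm B})\ge 0$ (hence $\bar F^{(\lambda)}$ nondecreasing and $Z^{(\lambda)}$ nonincreasing in $\lambda$) is correct.
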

\begin{proof}
	See Appendix \ref{sec:monotone-proof}.
\end{proof}

\begin{remark}
    The continuity of $\bm{\mathcal{B}}^{(\lambda)}$ in $\lambda$ is a technical condition that almost always holds. This condition may break, albeit in some rare cases, when at a particular value of $\lambda$, two (or more) distinct local minima of the fractional free energy $F^{(\lambda)}(\bm{\mathcal{B}})$ (considered as a function of $\bm{\mathcal{B}}$) yield the same value. In such instances, as we vary $\lambda$ and pass through this special value of $\lambda$, the absolute minimum can jump from one local minimum to another.
\end{remark}

\begin{theorem}{[Concavity of the Fractional Free Energy]} \label{th:convexity}
	Assuming ${\bm \rho}\doteq (\rho_{ab}|\{a,b\}\in\E)$ is fixed, $\bar{F}^{(\lambda)}$ is a concave function of $\lambda$. 
\end{theorem}

\begin{proof}
    Since $\rho_{ab}^{(\lambda)}$ is linear in $\lambda$, from Eq. (\ref{eq:H}), $H^{(\lambda)}(\mathcal{B})$ is also linear in $\lambda$. Therefore, for any $\lambda$, $\lambda_0$, $\lambda_1$ with $\lambda_0 < \lambda_1$, we have
    \[
    H^{(\lambda)}(\mathcal{B}) = \frac{\lambda_1 - \lambda}{\lambda_1 - \lambda_0} H^{(\lambda_0)}(\mathcal{B}) + \frac{\lambda - \lambda_0}{\lambda_1 - \lambda_0} H^{(\lambda_1)}(\mathcal{B}),
    \]
    and consequently,
    \[
    F^{(\lambda)}(\mathcal{B}) = \frac{\lambda_1 - \lambda}{\lambda_1 - \lambda_0} F^{(\lambda_0)}(\mathcal{B}) + \frac{\lambda - \lambda_0}{\lambda_1 - \lambda_0} F^{(\lambda_1)}(\mathcal{B}).
    \]
    Thus, we have
    \begin{align*}
    \bar{F}^{(\lambda)} = F^{(\lambda)}(\mathcal{B}^{(\lambda)}) &= \frac{\lambda_1 - \lambda}{\lambda_1 - \lambda_0} F^{(\lambda_0)}(\mathcal{B}^{(\lambda)}) + \frac{\lambda - \lambda_0}{\lambda_1 - \lambda_0} F^{(\lambda_1)}(\mathcal{B}^{(\lambda)}) \\
    &\ge \frac{\lambda_1 - \lambda}{\lambda_1 - \lambda_0} F^{(\lambda_0)}(\mathcal{B}^{(\lambda_0)}) + \frac{\lambda - \lambda_0}{\lambda_1 - \lambda_0} F^{(\lambda_1)}(\mathcal{B}^{(\lambda_1)}) \\
    &= \frac{\lambda_1 - \lambda}{\lambda_1 - \lambda_0} \bar{F}^{(\lambda_0)} + \frac{\lambda - \lambda_0}{\lambda_1 - \lambda_0} \bar{F}^{(\lambda_1)},
    \end{align*}
    where we used the fact that $F^{(\lambda)}(\mathcal{B}^{(\lambda)}) = \min_{\mathcal{B}} F^{(\lambda)}(\mathcal{B}) \le F^{(\lambda)}(\mathcal{B}^{(\lambda')})$ at the inequality step. This proves the concavity of $\bar{F}^{(\lambda)}$ in $\lambda$.
\end{proof}

Note that all the statements in this manuscript so far are made for arbitrary Ising models, i.e., without any restrictions on the graph and vectors of the pair-wise interactions, ${\bm J}$, and singleton biases, ${\bm h}$. If the discussion is limited to attractive (ferromagnetic) Ising models, $\forall \{a,b\}\in\E:\ J_{ab}\geq 0$, the following statement becomes a corollary of Theorem \ref{th:monotone}:
\begin{lemma}{[Exact Fractional]} \label{th:exact-fractional}
	In the case of an attractive Ising model, any fixed ${\bm\rho}$, and also assuming that ${\cal \bm B}^{(\lambda)}$ is differentiable in $\lambda$, there exists $\lambda_* \in [0,1]$ such that $Z^{(\lambda_*)}=Z$. 
\end{lemma}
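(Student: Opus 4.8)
The plan is to deduce the lemma from Theorem~\ref{th:monotone} together with the two known sign-definite bounds: $Z^{(TRW)} = Z^{(0)} \geq Z$ (the TRW upper bound of \cite{wainwright_tree-reweighted_2003}, valid for arbitrary Ising models and in particular attractive ones) and $Z^{(BP)} = Z^{(1)} \leq Z$ (the Bethe lower bound of \cite{ruozzi_bethe_2012}, valid precisely in the attractive case). First I would invoke Theorem~\ref{th:monotone}, which gives that $\lambda \mapsto \bar F^{(\lambda)}$, hence $\lambda \mapsto Z^{(\lambda)} = \exp(-\bar F^{(\lambda)})$, is continuous on $[0,1]$; continuity is what lets us apply the intermediate value theorem. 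Second, I would record the two endpoint inequalities, which place $Z$ in the closed interval $[Z^{(1)}, Z^{(0)}]$ (note the monotonicity statement of Theorem~\ref{th:monotone} is consistent with this: $\bar F^{(\lambda)}$ increasing in $\lambda$, i.e. $Z^{(\lambda)}$ decreasing, matches $Z^{(0)}\geq Z^{(1)}$).

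Third, I would assemble the argument: since $g(\lambda) \doteq Z^{(\lambda)}$ is continuous on $[0,1]$ with $g(0) = Z^{(0)} \geq Z$ and $g(1) = Z^{(1)} \leq Z$, the intermediate value theorem yields some $\lambda_* \in [0,1]$ with $g(\lambda_*) = Z$, i.e. $Z^{(\lambda_*)} = Z$. The endpoint cases where one of the inequalities is an equality ($Z = Z^{(0)}$ or $Z = Z^{(1)}$) are covered by taking $\lambda_* = 0$ or $\lambda_* = 1$ respectively, so no separate treatment is needed. I would also remark that strict monotonicity (which the convexity statement of Theorem~\ref{th:convexity} would help establish in the attractive case, the relevant regime here) upgrades "there exists" to "there exists a unique" $\lambda_*$, matching the abstract's claim, though the bare existence statement of the lemma only needs continuity plus the sandwiching bounds.

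The main obstacle is not in this deduction — which is a three-line IVT argument — but in making sure the two ingredients it rests on are cited and applicable exactly as stated: that the TRW bound $Z^{(0)} \geq Z$ holds for the particular $\bm\rho$ being held fixed (it does, for any valid spanning-tree weighting satisfying \eqref{eq:rho}), and that the Bethe/BP lower bound $Z^{(1)} \leq Z$ of \cite{ruozzi_bethe_2012} is exactly the attractive-case hypothesis invoked in the lemma. A secondary subtlety worth a sentence is whether the fractional message-passing fixed point used to define $Z^{(\lambda)}$ exists and is selected consistently for all $\lambda \in [0,1]$ — the paper's standing assumption that the fractional scheme converges handles this, and continuity of $\bar F^{(\lambda)}$ in Theorem~\ref{th:monotone} presupposes it — so the IVT can be applied to a genuinely well-defined continuous curve.
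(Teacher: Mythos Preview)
Your proposal is correct and is essentially the paper's own proof: cite the TRW upper bound $Z^{(0)}\geq Z$ and the Ruozzi attractive-case lower bound $Z^{(1)}\leq Z$, then invoke the continuity/monotonicity of Theorem~\ref{th:monotone} to conclude via the intermediate value theorem. The paper's version is terser (it does not name the IVT explicitly), but the ingredients and the logical structure are identical.
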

\begin{proof}
In the case of attractive Ising model, BP provides a lower bound on the exact partition function, $Z^{(\lambda=1)}\leq Z$, as proven in \citep{ruozzi_bethe_2012}. On the other hand, we know from \citep{wainwright_graphical_2007}, and also by construction, that $Z^{(\lambda=0)}\geq Z$, i.e., the TRW estimate of the partition function provides an upper bound to the exact partition function. These lower and upper bounds, combined with the monotonicity of $Z^{(\lambda)}$ stated in Theorem \ref{th:monotone}, result in the desired statement. 
\end{proof}

\section{Fractional Re-Parametrization for Exact Inference}\label{sec:Re-Param-Exact}

\begin{theorem}{[Exact Relation Between $Z$ and $Z^{(\lambda)}$ for any $\lambda \in [0,1]$ ]},\label{th:exact}
	\begin{align}\label{eq:Z-Zf}
		Z & =Z^{(\lambda)} {\tilde Z}^{(\lambda)},\\ 
		\label{eq:calZf}
		{\tilde Z}^{(\lambda)} & \doteq \sum\limits_{\x}\frac{\prod\limits_{\{a,b\}\in{\cal E}} \left({\cal B}^{(\lambda)}_{ab}(x_a,x_b)\right)^{\rho^{(\lambda)}_{ab}} }{\prod\limits_{a\in\V} \left({\cal B}^{(\lambda)}_a(x_a)\right)^{\sum\limits_{c\sim a}\rho^{(\lambda)}_{ac}-1}}=\mathbb{E}_{\x\sim p^{(\lambda)}_0(\cdot)}\left[\frac{\prod\limits_{\{a,b\}\in{\cal E}} \left({\cal B}^{(\lambda)}_{ab}(x_a,x_b)\right)^{\rho^{(\lambda)}_{ab}} }{\prod\limits_{a\in\V} \left({\cal B}^{(\lambda)}_a(x_a)\right)^{\sum\limits_{c\sim a}\rho^{(\lambda)}_{ac}}}\right],
	\end{align}
	where $p^{(\lambda)}_0({\bm x})\doteq \prod_a {\cal B}^{(\lambda)}_a(x_a)$ is the component-independent distribution devised from the FBP-optimal node-marginal probabilities.

\end{theorem}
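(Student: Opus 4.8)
The plan is to prove the sharper \emph{pointwise} re-parametrization identity
\begin{equation}\label{eq:reparam-plan}
\exp\!\left(-E(\x;\J,\h)\right)=Z^{(\lambda)}\,{\cal B}^{(\lambda)}(\x),\qquad\forall\,\x\in\{\pm 1\}^{|\V|},
\end{equation}
where ${\cal B}^{(\lambda)}(\x)$ is the re-parametrized joint belief of Eq.~(\ref{eq:joint-belief}) evaluated at a fixed point of the fractional message-passing equations (equivalently, at a stationary point of the fractional Lagrangian of Appendix~\ref{sec:Lagrangian}). Everything in the theorem then follows immediately: summing (\ref{eq:reparam-plan}) over $\x$ gives $Z=Z^{(\lambda)}\sum_\x{\cal B}^{(\lambda)}(\x)=Z^{(\lambda)}{\cal Z}^{(\lambda)}$, which is Eq.~(\ref{eq:Z-Zf}) together with the first line of (\ref{eq:calZf}); and, since the denominator exponent in (\ref{eq:joint-belief}) equals $\sum_{c\sim a}\rho^{(\lambda)}_{ac}-1$, multiplying and dividing by $\prod_a{\cal B}^{(\lambda)}_a(x_a)$ rewrites ${\cal B}^{(\lambda)}(\x)=p^{(\lambda)}_0(\x)\,\frac{\prod_{\{a,b\}\in\E}\left({\cal B}^{(\lambda)}_{ab}(x_a,x_b)\right)^{\rho^{(\lambda)}_{ab}}}{\prod_{a\in\V}\left({\cal B}^{(\lambda)}_a(x_a)\right)^{\sum_{c\sim a}\rho^{(\lambda)}_{ac}}}$, so that $\sum_\x$ turns into exactly the expectation in the last line of (\ref{eq:calZf}).

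To establish (\ref{eq:reparam-plan}) I would invoke the fixed-point (message-passing) parametrization of the optimal node- and edge-beliefs, as derived in Appendices~\ref{sec:Lagrangian}--\ref{sec:MP}. Writing $\alpha_a\doteq\sum_{b\sim a}\rho^{(\lambda)}_{ab}-1$ for brevity, a fixed point satisfies
\begin{align*}
{\cal B}^{(\lambda)}_a(x_a)&=\frac{1}{z_a}\prod_{b\sim a}\mu^{(\lambda)}_{b\to a}(x_a),\\
{\cal B}^{(\lambda)}_{ab}(x_a,x_b)&=\frac{1}{z_{ab}}\exp\!\left(-\frac{E_{ab}(x_a,x_b)}{\rho^{(\lambda)}_{ab}}\right)\left(\mu^{(\lambda)}_{b\to a}(x_a)\right)^{\alpha_a/\rho^{(\lambda)}_{ab}}\left(\mu^{(\lambda)}_{a\to b}(x_b)\right)^{\alpha_b/\rho^{(\lambda)}_{ab}},
\end{align*}
where $z_a=\sum_{x_a}\prod_{b\sim a}\mu^{(\lambda)}_{b\to a}(x_a)$ and $z_{ab}$ is the matching edge normalizer; comparison with Eq.~(\ref{eq:Zf}) identifies $Z^{(\lambda)}=\prod_{\{a,b\}\in\E}z_{ab}^{\rho^{(\lambda)}_{ab}}\prod_{a\in\V}z_a^{-\alpha_a}$. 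I would then substitute these into the right-hand side of Eq.~(\ref{eq:joint-belief}) and track the cancellations. Raising ${\cal B}^{(\lambda)}_{ab}$ to the power $\rho^{(\lambda)}_{ab}$ clears the $1/\rho^{(\lambda)}_{ab}$ exponents, leaving $\exp(-E_{ab})\left(\mu^{(\lambda)}_{b\to a}(x_a)\right)^{\alpha_a}\left(\mu^{(\lambda)}_{a\to b}(x_b)\right)^{\alpha_b}$; gathering, for a fixed node $a$, the contributions of all incident edges yields $\left(\prod_{b\sim a}\mu^{(\lambda)}_{b\to a}(x_a)\right)^{\alpha_a}$, which is precisely the message-dependent part of $\left({\cal B}^{(\lambda)}_a(x_a)\right)^{\alpha_a}$ in the denominator of (\ref{eq:joint-belief}). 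Hence every message cancels, $\prod_{\{a,b\}}\exp(-E_{ab})$ reassembles into $\exp(-E(\x))$, only the normalizers survive, and one reads off ${\cal B}^{(\lambda)}(\x)=\exp(-E(\x))\prod_{\{a,b\}}z_{ab}^{-\rho^{(\lambda)}_{ab}}\prod_a z_a^{\alpha_a}=\exp(-E(\x))/Z^{(\lambda)}$, i.e.\ (\ref{eq:reparam-plan}).

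I expect the real obstacle to be not the algebra but its legitimacy: the fractional powers and the logarithms implicit in $F^{(\lambda)}$ require the stationary edge-beliefs ${\cal B}^{(\lambda)}_{ab}$ (and hence the messages $\mu^{(\lambda)}$) to be strictly positive, and require the fixed-point relations above to hold in exactly the stated form. I would dispose of this by working at an interior fixed point in ${\cal D}$ --- the generic situation, implicit in the paper's standing assumption that fractional message-passing converges --- and, for a statement valid up to the boundary of ${\cal D}$, passing to the limit via continuity of both sides of $Z=Z^{(\lambda)}{\cal Z}^{(\lambda)}$ in the beliefs. A secondary, purely bookkeeping point is carrying the node exponent $\alpha_a$ consistently through its per-edge split $\alpha_a/\rho^{(\lambda)}_{ab}$ in the edge-belief formula; arranging this split correctly is exactly what forces the message factors to telescope.

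Two remarks worth recording in the write-up. First, (\ref{eq:reparam-plan}) is the $\lambda$-interpolating generalization of the tree re-parametrization property of \cite{wainwright_tree-based_2002} (the case $\lambda=0$) and of the Bethe re-parametrization underlying the loop calculus of \cite{chertkov_loop_2006} (the case $\lambda=1$): since $\left({\cal B}^{(\lambda)}_a\right)^{\sum_{c\sim a}\rho^{(\lambda)}_{ac}}=\prod_{c\sim a}\left({\cal B}^{(\lambda)}_a\right)^{\rho^{(\lambda)}_{ac}}$ distributes the node factors over incident edges, the integrand of the expectation in (\ref{eq:calZf}) equals $\prod_{\{a,b\}\in\E}\left(\frac{{\cal B}^{(\lambda)}_{ab}(x_a,x_b)}{{\cal B}^{(\lambda)}_a(x_a)\,{\cal B}^{(\lambda)}_b(x_b)}\right)^{\rho^{(\lambda)}_{ab}}$, and expanding each factor about $1$ recovers a fractional loop-series representation of ${\cal Z}^{(\lambda)}$. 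Second, (\ref{eq:reparam-plan}) is $\lambda$-invariant only pointwise: after summation $\sum_\x{\cal B}^{(\lambda)}(\x)={\cal Z}^{(\lambda)}\neq 1$ in general (the discrepancy is the loop contribution), so it is the product $Z^{(\lambda)}{\cal Z}^{(\lambda)}$, not either factor by itself, that does not depend on $\lambda$.
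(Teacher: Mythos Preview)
Your proposal is correct and follows essentially the same route as the paper's proof in Appendix~\ref{sec:exact}: both establish the pointwise re-parametrization identity $\exp(-E(\x))=Z^{(\lambda)}\,{\cal B}^{(\lambda)}(\x)$ by substituting the message-passing expressions (\ref{eq:Ba-mu},\ref{eq:Bab-mu}) into (\ref{eq:joint-belief}), recognizing the surviving normalizers as $Z^{(\lambda)}$ from Eq.~(\ref{eq:Zf}), and then summing over~$\x$. Your write-up is in fact more explicit than the paper's about the telescoping of the message factors and about the passage to the expectation form in the last line of~(\ref{eq:calZf}).
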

\begin{proof}
	See Appendix \ref{sec:exact}.
\end{proof}

Notice that ${\tilde Z}^{(\lambda)}$, defined in Eq.~(\ref{eq:calZf}), is the exact multiplicative correction term expressed in terms of the FBP solution. This term should equal $1$ at the optimal value of $\lambda^*(\bm{J}, \bm{h})$. According to Lemma \ref{th:exact-fractional}, this optimal value is achievable in the case of the attractive Ising model.

\begin{algorithm}[h]
	\caption{ $\lambda$-optimal Fractional Belief Propagation}
	\label{alg:fractionaMP}
	\begin{algorithmic}
		\STATE {\bf Input:} $\G=(\V,\E)$, graph.
		\STATE {\bf Initialize:} $\rho_{ab} = (|\mathcal{V}| - 1)/|\cal E|$
		\STATE  {\bf For:} $\lambda=0:0.05:1,$ 
		\begin{enumerate}
			\item Compute $\rho_{ab}^{(\lambda)} = \rho_{ab} + \lambda (1- \rho_{ab})$.
			\item { Use Eq.~(\ref{eq:Zf}) (and formulas from Appendix \ref{sec:MP})} to find $Z^{(\lambda)}, {\cal B}^{(\lambda)}_a(x_a), {\cal B}^{(\lambda)}_{ab}(x_a,x_b)$
			\item Compute ${\tilde Z}^{(\lambda)}$ utilizing Eq. (\ref{eq:calZf})
		\end{enumerate}
		\STATE  {\bf End}
		\begin{enumerate}
			\item Find $\lambda_*$ where ${\tilde Z}^{(\lambda)} = 1$
			\item Return $Z = Z^{(\lambda_*)}$
		\end{enumerate}
	\end{algorithmic}
\end{algorithm}

Theorem \ref{th:exact} suggests using Algorithm \ref{alg:fractionaMP}, presented as a pseudo-algorithm, which we refer to as the $\lambda$-optimal (or simply optimal) Fractional Belief Propagation Algorithm (O-FBP), to approximate the exact partition function $Z$.

We will also explore an alternative method for transforming the main theoretical result of this paper, Theorem \ref{th:exact}, into a practical computational tool in the next subsection.

\subsection{Optimal $\lambda$ from $Z$}

\begin{algorithm}[h]
	\caption{ Optimal $\lambda$ from $Z$}
	\label{alg:exact}
	\begin{algorithmic}
		\STATE {\bf Input:} $\G=(\V,\E)$, graph. Exact value of $Z$.
		\STATE {\bf Initialize:} $\forall \{a,b\}\in{\cal E}:\ \rho_{ab} = (|\mathcal{V}| - 1)/|\cal E|$
		\STATE  {\bf For:} $\lambda=0:0.05:1,$ 
		\begin{enumerate}
			\item Compute $\forall \{a,b\}\in{\cal E}:\ \rho_{ab}^{(\lambda)} = \rho_{ab} + \lambda (1- \rho_{ab})$.
			\item Use Eq.~(\ref{eq:Zf}) (and formulas from Appendix \ref{sec:MP}) to find $Z^{(\lambda)}$
		\end{enumerate}
		\STATE  {\bf End}
		\begin{enumerate}
			\item Find $\lambda_*$ where ${Z}^{(\lambda)} = Z$
			\item Return $Z = Z^{(\lambda_*)}$
		\end{enumerate}
	\end{algorithmic}
\end{algorithm}

Suppose we have access to the exact value of the partition function $Z$. Then, Theorem \ref{th:exact} suggests Algorithm \ref{alg:exact}, which allows us to identify the optimal $\lambda_*$, that is, the value of $\lambda$ for which FBP (which can be evaluated efficiently, typically in linear time with respect to the problem size) outputs the exact result for the partition function. The exact value of $Z$, required as an input for Algorithm \ref{alg:exact}, can be determined via the following TRW- (or BP-) based computations:
\begin{enumerate}
\item Compute $Z^{(0)}$ (or $Z^{(1)}$).
\item Compute $\tilde{Z}^{(0)}$ (or $\tilde{Z}^{(1)}$) by sampling according to Eq.~(\ref{eq:Zf}) (and the formulas in Appendix \ref{sec:MP}).
\item Use the identity $Z = Z^{(1)} \tilde{Z}^{(1)}$ (or $Z = Z^{(0)} \tilde{Z}^{(0)}$).
\end{enumerate}

As we will see in the next section, where we turn to the empirical exploration of the approach presented in this paper, Algorithm \ref{alg:exact} proves to be useful for making efficient computations of marginals in high-dimensional case.

\section{Numerical Experiments}\label{sec:experiments}

\subsection{Setting, Use Cases and Methodology}

In this section, we present the results of our numerical experiments, supporting and also further developing the theoretical results of the preceding sections. Specifically, we will describe the details of our experiments with the Ising model in the following \enquote{use cases:}
(1) Over an exemplary planar graph -- $N\times N$ square grid, where $N=[3::5]$;  
(2) Over a fully connected graph, $K_N$, where $N=[9::5^2]$. The notation $[a::b]$ indicates a range from $a$ to $b$.

In both cases, we consider attractive models and mixed models -- that is, models with some interactions being attractive (ferromagnetic), $J_{ab}>0$, and some repulsive (antiferromagnetic), $J_{ab}<0$. We experiment with the zero-field case, $\h=0$, and also with the general (non-zero field) case. All of our models are \enquote{disordered} in the sense that we have generated samples of random $\J$ and $\h$. Specifically, in the attractive (mixed) case, components of $\J$ are i.i.d. from the uniform distribution, $\mathcal{U}(0,1)$ ($\mathcal{U}(-1,1)$), and components of $\h$ are i.i.d. from $\mathcal{U}(-1,1)$. In some of our experiments, we draw a single instance of $\J$ and $\h$ from the respective ensemble. However, in other experiments -- aimed at analyzing the variability within the respective ensemble -- we show results for a number of instances.

We acknowledge that there is significant flexibility in selecting a set of spanning trees and then re-weighting respective contributions to ${\bm \rho} \doteq (\rho_{ab} \mid \{a,b\} \in \mathcal{E})$ according to Eq.~(\ref{eq:rho}). (See some discussion of experiments with possible ${\bm \rho}$ in \citep{wainwright_new_2005}.) However, we chose not to test this flexibility. Instead, in all of our experiments, ${\bm \rho}$ is chosen uniformly for a given graph. A direct corollary of Lemma 7.3.2 from \citep{wainwright_stochastic_2002} is that if the edge-uniform re-weighting is feasible, the following relation holds for all ${a,b} \in \mathcal{E}$: $\rho_{ab} = (|\mathcal{V}| - 1)/|\mathcal{E}|$. This lemma allows us to restate the set of linear constraints defining the polytope in the TRW rules of Eq.~(\ref{eq:rho}) solely in terms of the edge-weights, $\rho_{ab}$, thereby excluding tree-weights. Moreover, it was shown in \citep{wainwright_stochastic_2002} that the edge-uniform re-weighting is feasible and optimal, providing the lowest TRW upper-bound in the case of highly symmetric graphs, such as fully connected or double-periodic square grids. Our experiments are conducted on graphs where identifying the feasibility of edge-uniform re-weighting is straightforward. However, we also note, consistently with a remark in \citep{wainwright_stochastic_2002}, that there exist some exotic graphs for which edge-uniform assignment is infeasible.

We introduced the $\lambda$-optimal FBP Algorithm \ref{alg:fractionaMP}, which approximately calculates the partition function for a specified Ising model on a graph $\G = (\V, \E)$. This algorithm generalized traditional message-passing methods by interpolating between the Tree-Reweighted case ($\lambda=0$) and the Belief Propagation case ($\lambda=1$) for any $\lambda \in [0,1]$. Utilizing Theorem \ref{th:exact}, the algorithm identifies a particular value, denoted as $\lambda_*$, where the fractional partition function $Z^{(\lambda_*)}$ is approximately equal to the exact partition function $Z$. The algorithm employs Eq.~(\ref{eq:calZf}) to determine the correction factor ${\tilde Z}^{(\lambda)}$ utilizing fractional node and edge beliefs. Additionally, we initialize the edge appearance probabilities $\rho_{ab}$ uniformly.

To compute the fractional free energy, $\bar{F}^{(\lambda)}$ (minus the log of the fractional estimate for the partition function), we generalize the approach of \citep{bixler_sparse-matrix_2018}, which allows efficient, sparse-matrix-based implementation. Our code is available on  \url{https://github.com/hamidrezabehjoo/Fractional-TRW}.

To compare the fractional estimate $\bar{F}^{(\lambda)}=-\log Z^{(\lambda)}$ with the exact free energy, $\bar{F}=-\log Z$, we either use direct computations (feasible for the $8\times 8$ grid or smaller and for the fully connected graph over $64$ nodes or smaller) or, in the case of the planar grid and zero-field, when computation of the partition function is reduced to computing a determinant, we use the code from \citep{likhosherstov_inference_2019} (see also references therein). Our computations are done for values of $\lambda$ equally spaced with the increment $0.05$, between $0$ and $1$. 

The log-correction term, $\log {\tilde Z}^{(\lambda)}=\log Z- \log Z^{(\lambda)}$, is estimated by direct sampling according to Eq.~(\ref{eq:calZf}). (See Fig.~\ref{fig:convergence} and the respective discussion below for empirical analysis of the number of samples required to guarantee sufficient accuracy.) 
\begin{figure*}[t]
	\centering
	\subfigure[{}]{
		\centering
		\includegraphics[scale=0.38]{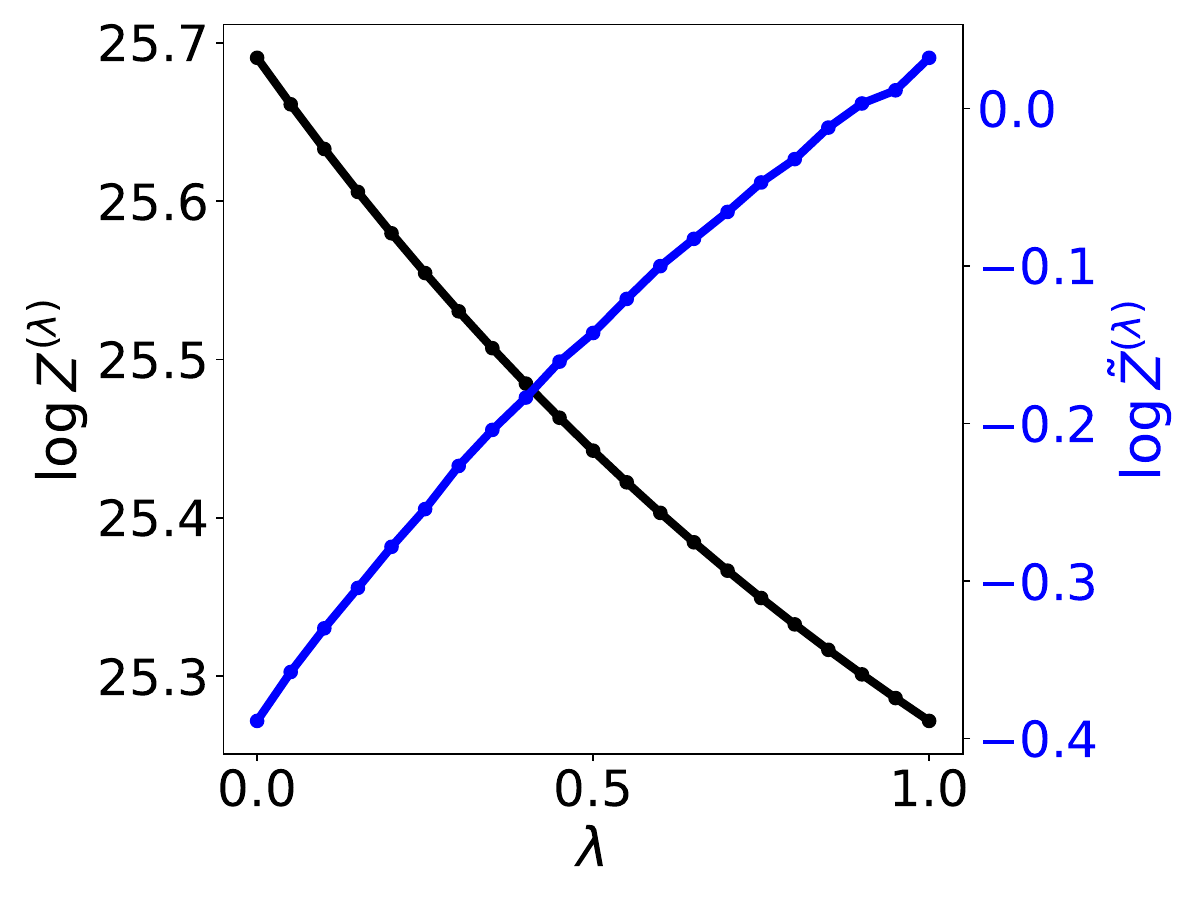}}
	\hspace{3mm}
	\subfigure[{}]{
		\includegraphics[scale=0.38]{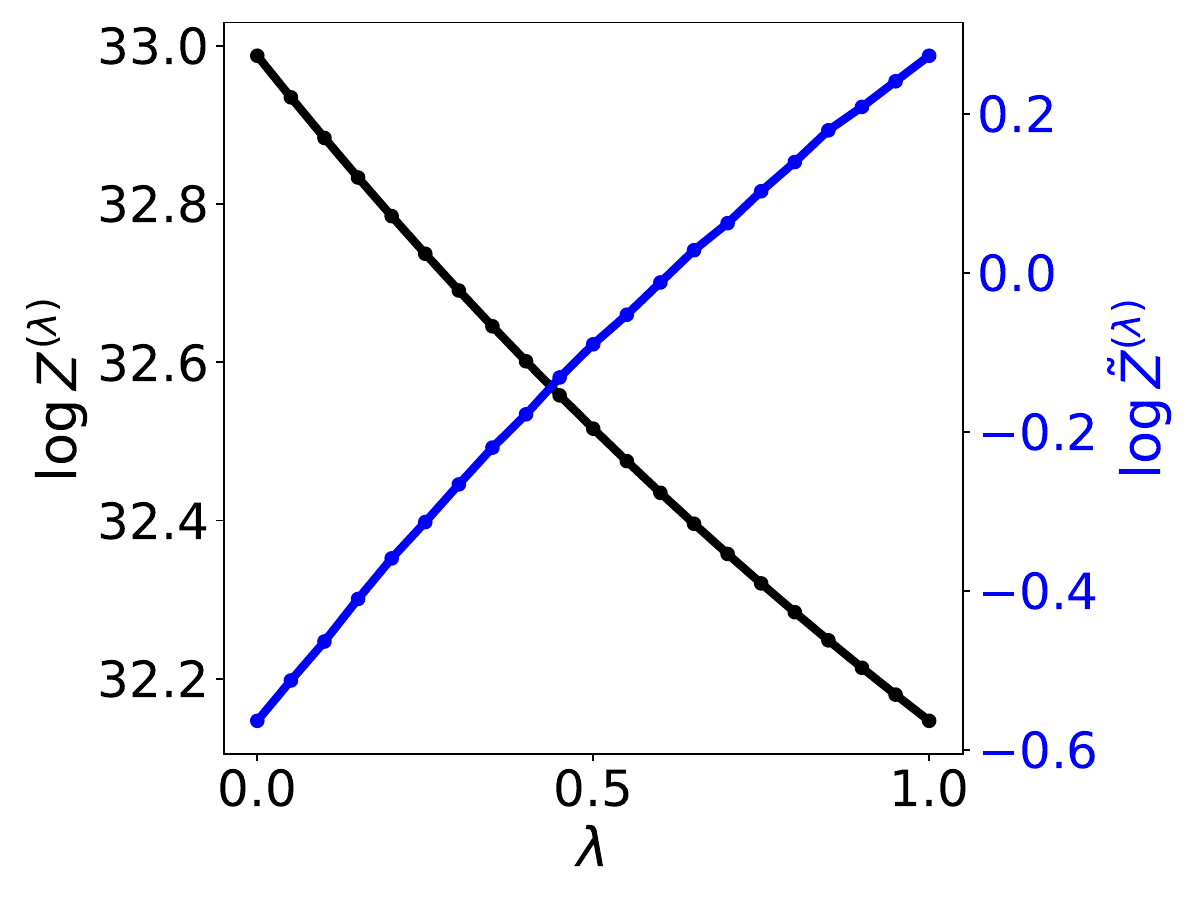}}
 \subfigure[{}]{
		\includegraphics[scale=0.38]{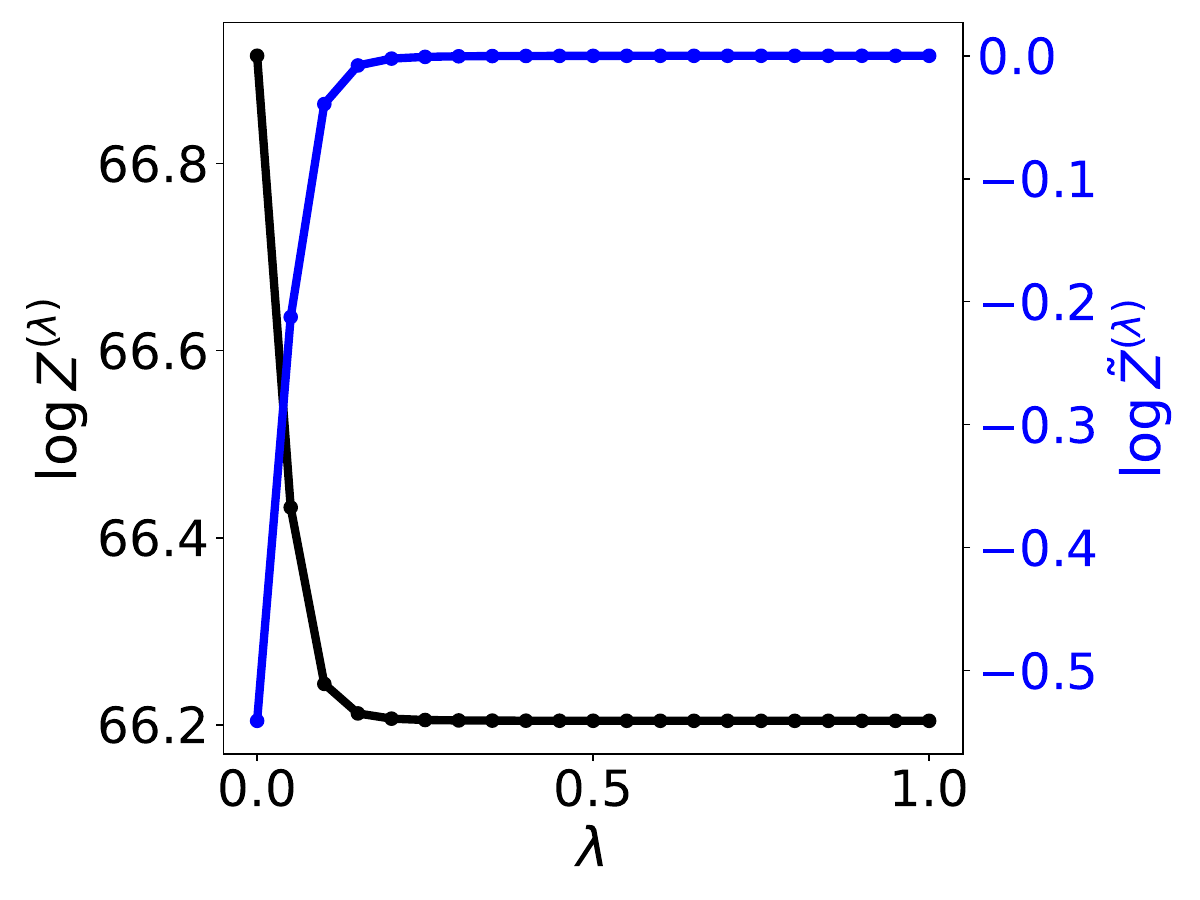}}
  \subfigure[{}]{
		\includegraphics[scale=0.38]{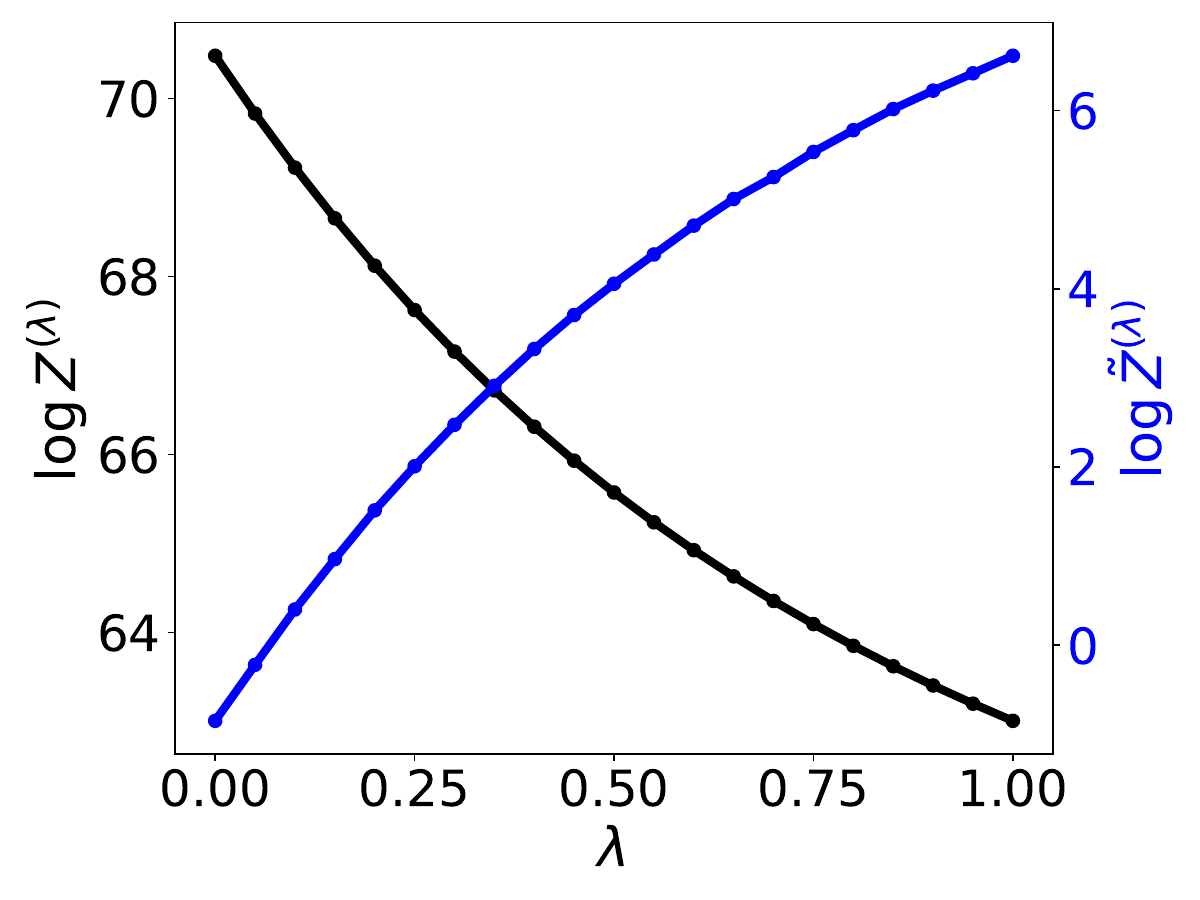}}
	\caption{The case of the attractive Ising Model (a) with non-zero field and random interaction, $\bm{h}, \bm{J} \sim \mathcal{U}(0,1)$ on a $3\times3$ planar grid; and (b) with zero field and random interaction, $\bm{J} \sim \mathcal{U}(0,1)$ on a $3\times3$ planar grid; (c) with non-zero field and random interaction, $ \bm{h}, \bm{J} \sim \mathcal{U}(0,1)$ on a $K_9$ complete graph and  (d) with zero field and random interaction, $\bm{J} \sim \mathcal{U}(0,1)$ on a $K_9$ complete graph. We show fractional log-partition function (minus fractional free energy) on the left and the respective correction factor ${\tilde Z}^{(\lambda)}$ on the right vs the fractional parameter, $\lambda$. We observe monotonicity and concavity of $\bar{F}^{(\lambda)}$ on $\lambda$. } 
	\vspace*{0pt}
	\label{fig:Zlambda} 
\end{figure*}

It is important to stress that, even though the $\lambda$-optimal FBP Algorithm \ref{alg:fractionaMP} is a direct extension of what was discussed in the literature in the past for the TRW $\lambda=0$ and BP $\lambda=1$ cases, extending the algorithm to the interpolating $\lambda\in[0,1]$ values is novel. In this regard, the $\lambda=0$ and $\lambda=1$ versions of the $\lambda$-optimal FBP Algorithm should be considered as providing baselines/benchmarks for its performance at the interpolating values of $\lambda$.

\subsection{Properties of the Fractional Free Energy}

We use Algorithm \ref{alg:fractionaMP} for the fractional estimate of the log-partition function (minus fractional free energy), $\log Z^{(\lambda)}=-\bar{F}^{(\lambda)}$, and the log of the correction term, $\log {\tilde Z}^{(\lambda)}=\log Z-\log Z^{(\lambda)}=\bar{F}^{(\lambda)}-\bar{F}$. The results are shown as functions of $\lambda$ in Fig.~\ref{fig:Zlambda} where the monotonicity and concavity of $\bar{F}^{(\lambda)}$, proven in Theorem \ref{th:monotone} and Theorem \ref{th:convexity}, respectively, are confirmed.

\subsection{Relation between Exact and Fractional}
\label{sec:ExactFract}

Figs.~\ref{fig:Zlambda}, also provide empirical confirmation of Lemma \ref{th:exact-fractional} on existence of $\lambda_*$ in the case of attractive Ising model for which $Z^{(\lambda_*)}=Z$. 
Moreover, the full statement of Theorem \ref{th:exact}, i.e., the equality between the left- and right-hand sides of Eq.~(\ref{eq:Z-Zf}), is also confirmed in all of our simulations (over Ising models, attractive or not) with high accuracy (when we can verify it by computing $Z$ directly).

\begin{figure*}[h]
	\centering
	\subfigure[{}]{
		\centering
		\includegraphics[scale=0.38]{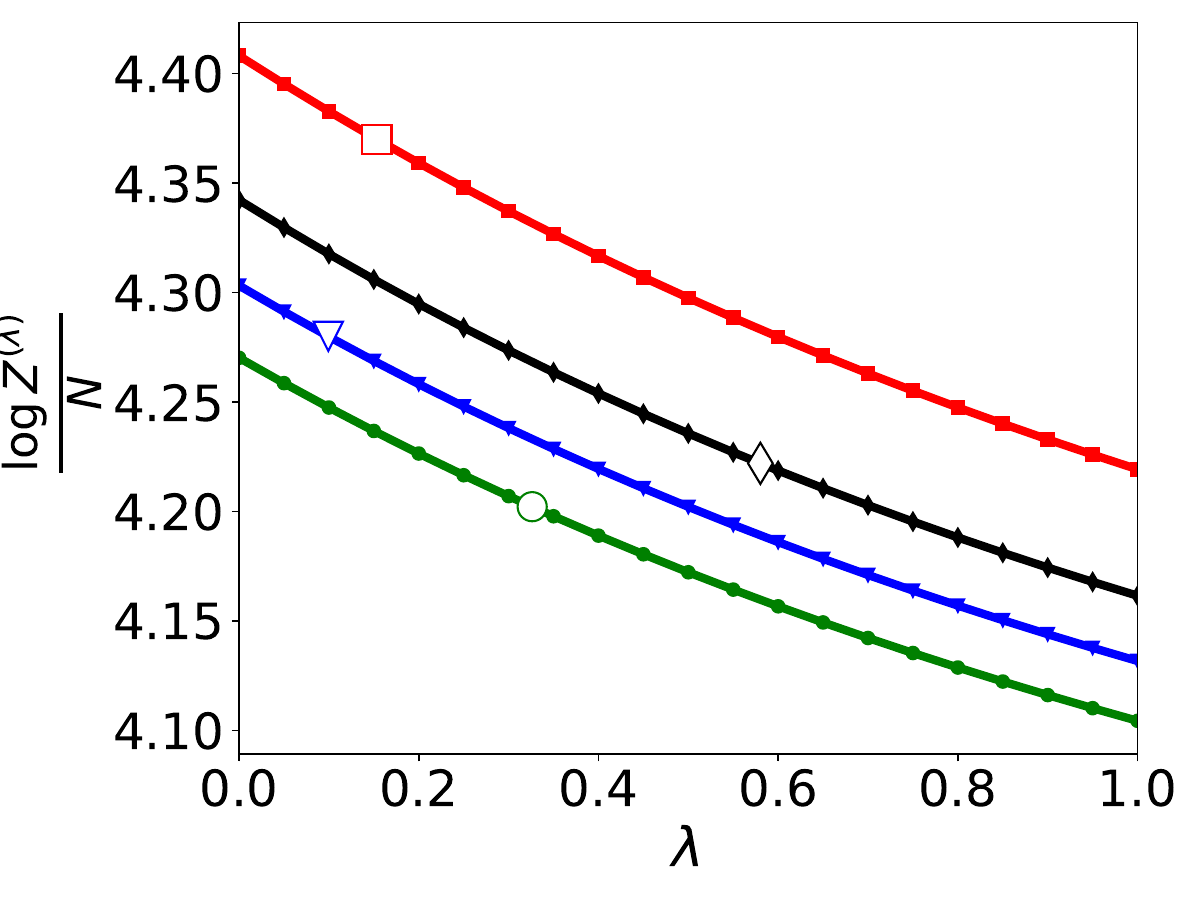}}
	\subfigure[{}]{
		\includegraphics[scale=0.38]{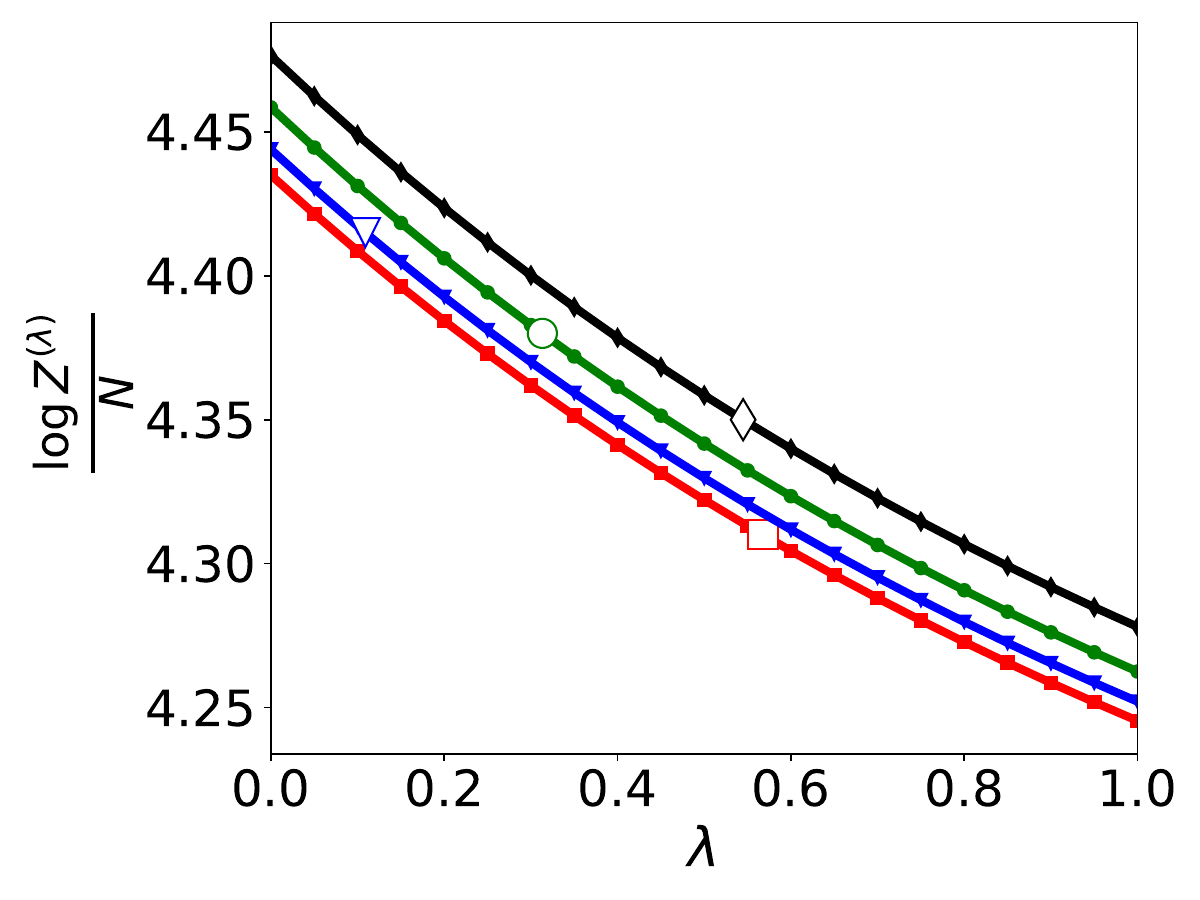}}
	\vspace*{0pt}
	\subfigure[{}]{
		\centering
		\includegraphics[scale=0.38]{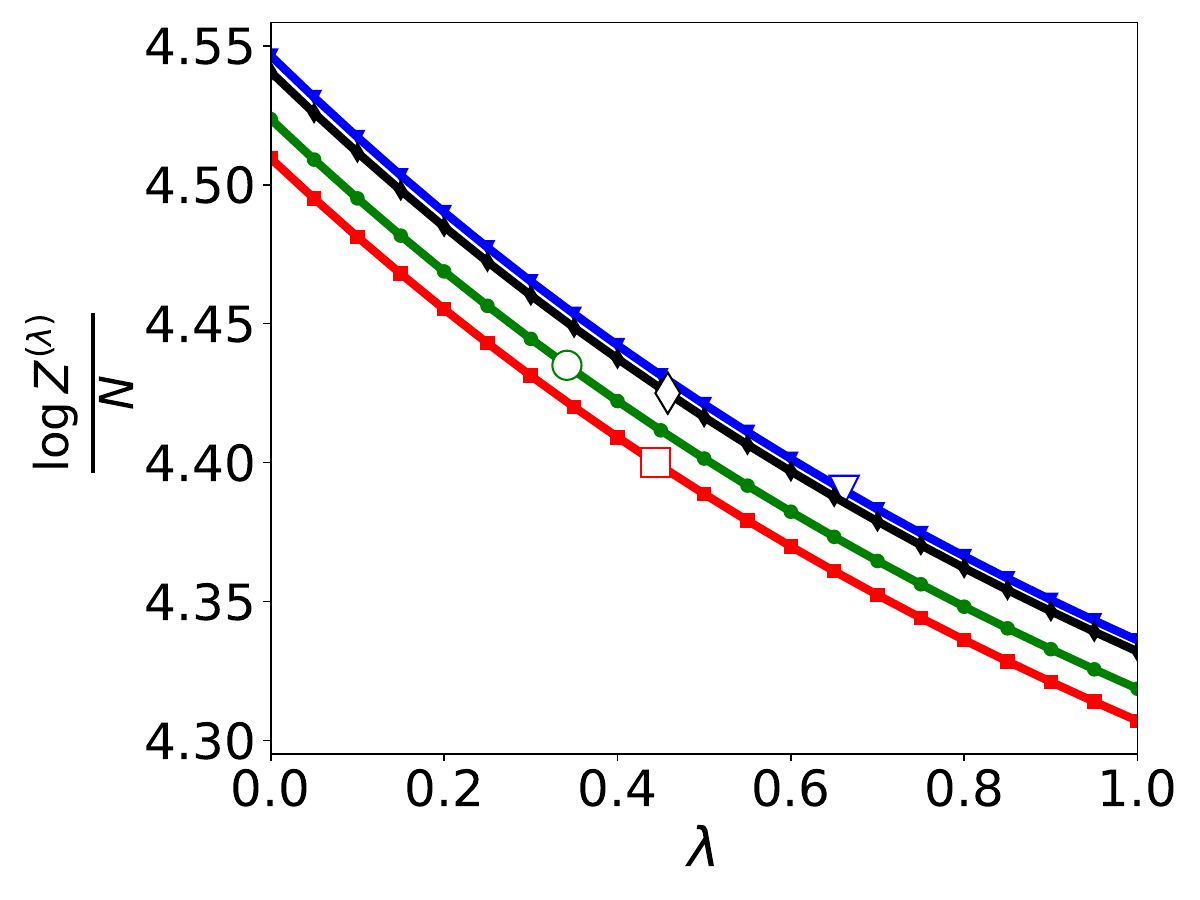}}
	\subfigure[{}]{
		\includegraphics[scale=0.38]{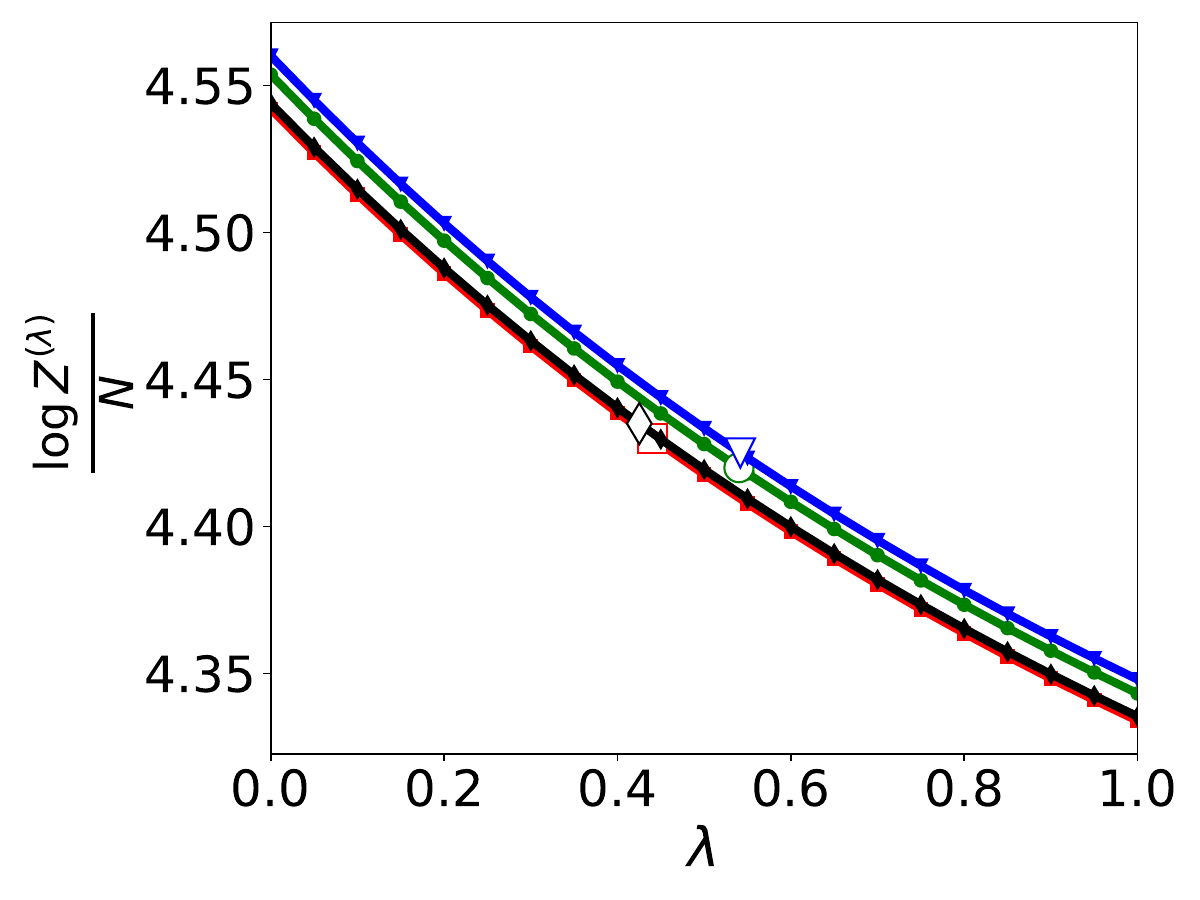}}
	\caption{Planar zero-field Ising models for $n\times n$ grid with $J \sim \mathcal{U}(0,1)$. For each $n$, four different instances are generated by sampling uniformly at random from the unit interval and the exact values, $\lambda_*$, are shown by open symbols on each graph. (a) $10\times 10$ (b) $20\times 20$ (c) $30\times 30$ (d) $40\times 40$. }
	\vspace*{0pt}
	\label{fig:concentration} 
\end{figure*}

\subsection{Concentration of the Fractional Parameter in Large Ensembles}
\label{sec:concentration}

Fig.~\ref{fig:variation} in Appendix \ref{sec:more-figures} shows the dependence of $\bar{F}^{(\lambda)}$ on the fractional parameter, $\lambda$, for a number of instances drawn from two exemplary attractive use-case ensembles. We observe that the variability in the value of $\bar{F}^{(\lambda)}$ is significant. Variability in $\lambda_*$, where $Z^{(\lambda_*)}=Z$, is also observed, even though it is significantly smaller.

This observation suggests that the variability of $\lambda_*$ within an attractive ensemble decreases as $N$ grows. This hypothesis is confirmed in our experiments with larger attractive ensembles, illustrated in Fig.~\ref{fig:concentration} for different $N$. For each $N$ in the case of an $N\times N$ grid, we generate $4$ different instances. We observe that as $N$ increases, the variability of $\lambda_*$ within the ensemble decreases dramatically. This observation is quite remarkable, as it suggests that it is sufficient to estimate $\lambda_*$ for one instance in a large ensemble and then use it for accurate estimation of $Z$ by simply computing $Z^{(\lambda_*)}$. This is also indicated in Algorithm \ref{alg:fbp-large-ensembles}. Our estimations, based on the data shown in Fig.~\ref{fig:concentration}, suggest that the width of the probability distribution of $\lambda_*$ within the ensemble scales as $\propto 1/\sqrt{N}$ with an increase in $N$. 

\begin{algorithm}
\SetAlgoLined
\KwIn{Ensemble $\mathcal{G}$ of similar graphical models, number of samples $M$, new instance $G_{new}$}
\KwOut{Estimated partition function $\hat{Z}$ and marginals $\{\hat{\mathcal{B}}_i\}, \{\hat{\mathcal{B}}_{ij}\}$ for $G_{new}$}

\SetKwFunction{FPrecomputeLambdaStar}{PrecomputeLambdaStar}
\SetKwFunction{FComputePartitionAndMarginals}{ComputePartitionAndMarginals}

\SetKwProg{Fn}{Function}{:}{}
\Fn{\FPrecomputeLambdaStar{$\mathcal{G}, M$}}{
    $\Lambda \gets \emptyset$\;
    \For{$i = 1$ \KwTo $M$}{
        $G_i \sim \mathcal{G}$ \tcp*{Sample a graphical model from the ensemble}
        $\lambda_i^* \gets \text{FindOptimalLambda}(G_i)$ \tcp*{Using Algorithm 1}
        $\Lambda \gets \Lambda \cup \{\lambda_i^*\}$\;
    }
    \KwRet $\bar{\lambda}^* = \frac{1}{M}\sum_{\lambda \in \Lambda} \lambda$ \tcp*{Average $\lambda^*$}
}

\Fn{\FComputePartitionAndMarginals{$G_{new}, \bar{\lambda}^*$}}{
    Compute edge appearance probabilities $\rho_{ij}^{(\bar{\lambda}^*)}$ using Eq.~(\ref{eq:rho-lambda});\\
    Run TRW algorithm with $\rho_{ij}^{(\bar{\lambda}^*)}$ on $G_{new}$;\\
    Obtain $Z^{(\bar{\lambda}^*)}$ using Eq.~(\ref{eq:Zf});
    and marginals $\{\hat{\mathcal{B}}_i\}, \{\hat{\mathcal{B}}_{ij}\}$; \\
    \KwRet $Z^{(\bar{\lambda}^*)}$, $\{B_i^{(\bar{\lambda}^*)}\}$, $\{B_{ij}^{(\bar{\lambda}^*)}\}$\;
    }
\caption{Fractional Belief Propagation for Large Ensembles}
\label{alg:fbp-large-ensembles}
\end{algorithm}

\subsection{Convergence of Sampling for Fractional Partition Function} 
\label{sec:samples}

\begin{figure*}[h]
	\centering
	\subfigure[{}]{
		\centering
		\includegraphics[scale=0.38]{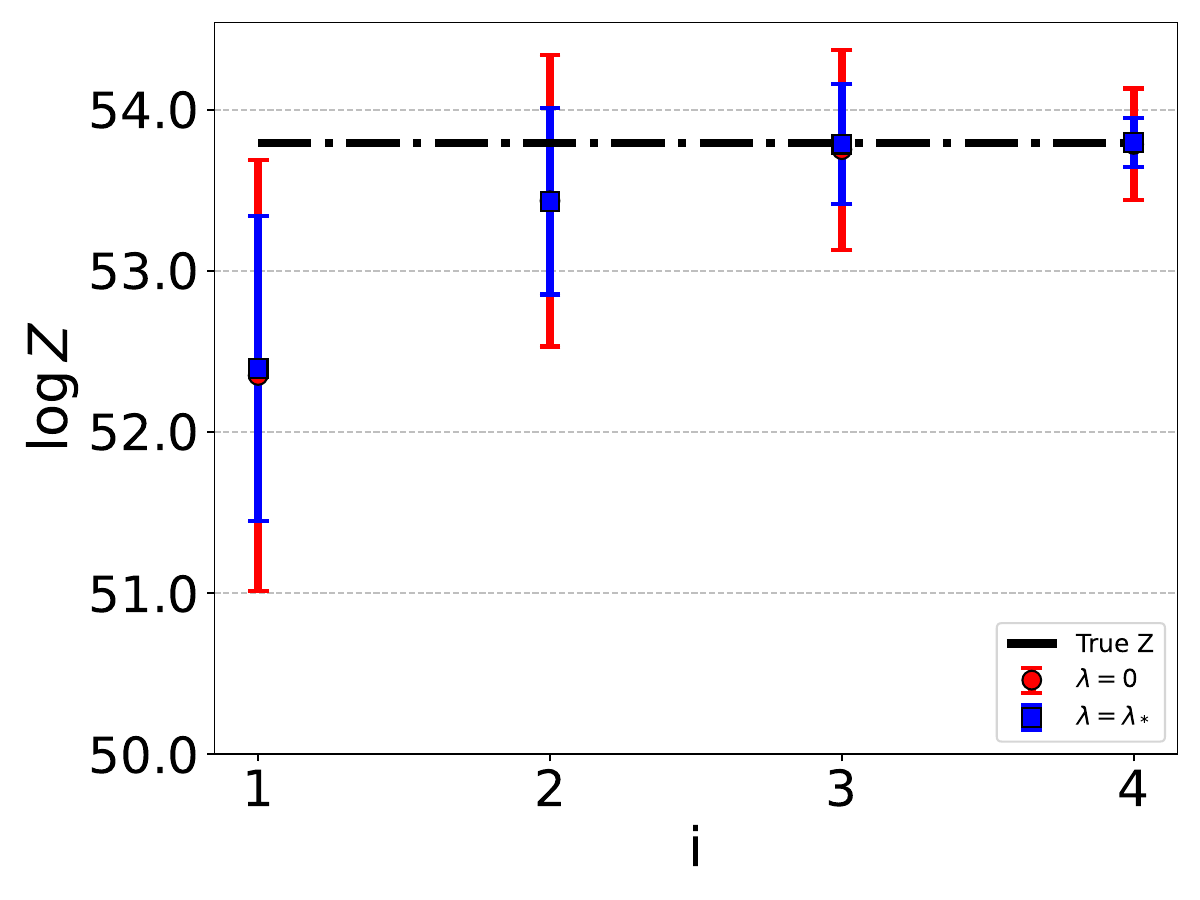}}
	\hspace{3mm}
 \centering
	\subfigure[{}]{
		\includegraphics[scale=0.38]{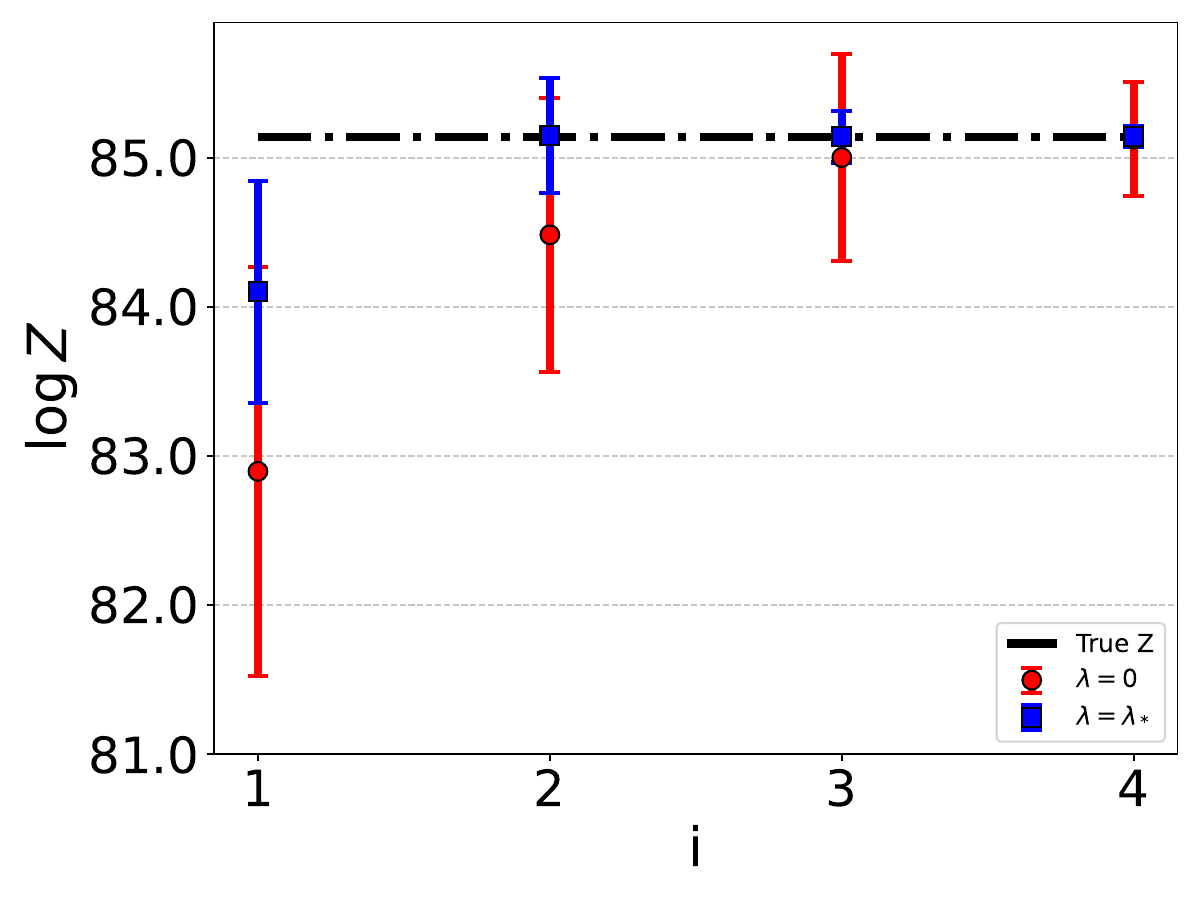}}
	\caption{Dependence of the sample-based estimate of $Z$ on the number of samples in the case of attractive Ising model for two different values of $\lambda$ in the case of (a) $4\times 4$, and (b) $5\times 5$ grids, where elements of ${\bm J}$ and ${\bm h}$ are drawn i.i.d. from  $\mathcal{U}[0,1]$. The number of samples are $N^i$, where $i=1,2,3,4$ and $N = |\mathcal{V}|$ (that is $N=16$ in (a) and $N=25$ in (b). }
	\vspace*{0pt}
	\label{fig:convergence} 
\end{figure*}

Fig.~\ref{fig:convergence} shows the dependence of the importance sampling-based estimate of $Z$ on the number of samples and $\lambda$, where the experiment was performed 100 times, and we report the mean and variance. Our major observation here is that the result converges with an increase in the number of samples. Moreover, comparing the speed of convergence with the size of the system, $N$, we conjecture that the number of samples needed for the convergence of the mean scales as $\mathcal{O}(N^{4})$ in the case of a generic $\lambda$. However, the scaling becomes much better, $\mathcal{O}(N^{2})$, at the optimal $\lambda_*$. Another empirical conclusion extracted from Fig.~\ref{fig:convergence} is that the variance of the importance sampling for estimating $\tilde{Z}^{(\lambda)}$ at $\lambda=\lambda_*$ reaches an $\mathcal{O}(1)$ (predefined) tolerance with the $N^4$ samples.

\subsection{Fractional Approach for Mixed (Attractive and Repulsive) Cases}
\label{sec:mixed}

Fig.~\ref{fig:anti} in Appendix \ref{sec:more-figures} shows two distinct situations which may be observed in the mixed case where some of the interactions are attractive but others are repulsive, allowing $Z^{(\lambda)}$ to be smaller or larger than $Z$. The former case is akin to the attractive model and $\lambda_*\in[0,1]$, while in the latter case there exists no $\lambda_*\in[0,1]$ such that $Z^{(\lambda_*)}=Z$.

\subsection{Application in Machine Learning -- Image De-Noising} \label{sec:denoising}

Consider a black-and-white image represented as a binary vector, ${\bm x} = ({\bm x}_a = \pm 1 | a \in {\cal V})$, where ${\cal V}$ is the set of nodes in a two-dimensional $n \times n$ square grid. For example, the cameraman image shown in Fig.~\ref{fig:cameraman} is a $256 \times 256 = 65536$ pixel image, which is represented as a binary vector, $\bm{x} \in \{\pm 1\}^{65536}$. 

The de-noising problem is set up as follows \cite{koller2009probabilistic}: assume that an image is sent through a noisy Bernoulli channel, where each pixel is flipped independently with a probability $\varepsilon$. The noisy version of the image ${\bm y}$ is received, and the task is to recover the original image ${\bm x}$.

We also make the plausible assumption that images are constructed in such a way that the probability for two neighboring pixels $a$ and $b$ to have the same values $x_a$ and $x_b$, $\exp(J)/(\cosh(J))$, is higher than the probability $\exp(-J)/(\cosh(J))$ that they have opposite values, where $J > 0$.

Then, the probability for the image ${\bm x}$ to be reconstructed from the observed noisy image ${\bm y}$ is given by
\begin{align}\label{eq:de-noising}
    p({\bm x}|{\bm y}) \propto \exp\left(J \sum_{\{a,b\} \in {\cal E}} x_a x_b + h \sum_{a \in {\cal V}} x_a y_a\right), \quad h = \frac{1}{2} \log\left(\frac{\varepsilon}{1 - \varepsilon}\right),
\end{align}
where ${\cal G} = ({\cal V}, {\cal E})$ is the graph of the two-dimensional grid, and ${\cal E}$ is the set of edges of the grid. Clearly, Eq.~(\ref{eq:de-noising}) shows the probability distribution of the ferromagnetic Ising model.

The basic FBP algorithm solving Eq.~(\ref{eq:Ff}), as well as its BP and TRW versions, with $\lambda = 1$ and $\lambda = 0$ respectively, can all be utilized to solve the de-noising problem.

Results of experiments de-noising an image with different algorithms are shown in Fig.~\ref{fig:cameraman}, where pixels are flipped with the noise level corresponding to $h = 1.1$. We then optimize the $J$ parameter in each algorithm to obtain the best performance, evaluated according to the following error function:
\[ \text{Error} = \frac{\text{Number of pixels different in true image and denoised version}}{\text{Total number of pixels}}. \]
We find that in the BP and TRW cases, the optimal $J$ values (minimizing the error) are $0.28$ and $0.32$ respectively. In the case of the basic FBP, we optimize not only over $J$ but also over $\lambda$, resulting in optimal values of $J = 0.3$ and $\lambda = 0.1$. We observe that the basic FBP algorithm demonstrates improved performance compared to both BP and TRW algorithms. 

Note that this example is too large to reliably evaluate our $\lambda$-optimal FBP algorithm \ref{alg:fractionaMP}. However, we conjecture that the value of $\lambda$ obtained by minimizing the error will converge, in the thermodynamic limit of a large graph, to the value of $\lambda$ which is optimal within Algorithm \ref{alg:fractionaMP}. This conjecture is intuitively based on two facts. First, the error is expressed in terms of pixel marginals. Second, the $\lambda$ that optimizes the pixel marginals should be close to the $\lambda$ that optimizes the partition function. This is because, in the thermodynamic limit, the marginals can be reformulated in terms of the partition functions of the graphical models, which differ from the original one only by fixing the respective $x_a$ variable (to $\pm 1$) at a single pixel among many.

\begin{figure*}
	\begin{tabular}{lcccccc}
		\begin{turn}{0}\end{turn} \qquad Image& Noisy Image &BP &TRW &FBP\\
		\includegraphics[scale=0.3]{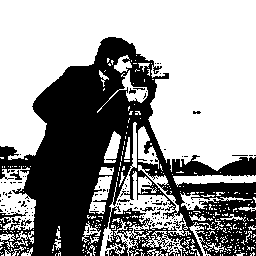}  &
		\includegraphics[scale=0.3]{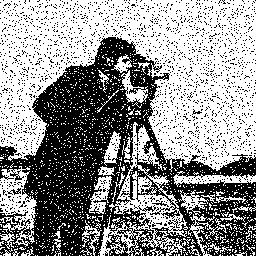}&
		\includegraphics[scale=0.3]{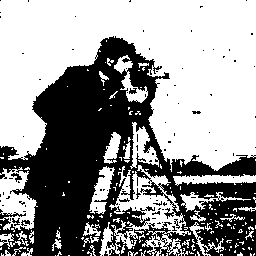}&
		\includegraphics[scale=0.3]{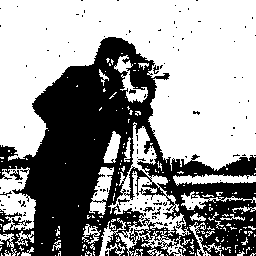}&
		\includegraphics[scale=0.3]{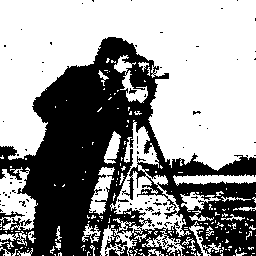}\\
        \begin{turn}{0}\end{turn} \qquad &  & 3.56\% & 3.48\% & 3.23\%

	\end{tabular}
	
	\caption{Different algorithms and their corresponding errors (listed below each image) for image de-noising.}
	\label{fig:cameraman} 
\end{figure*}

It is also important to emphasize that even though the optimal $\lambda_*$ was defined for $Z$ -- the partition function -- inspired by results reported in Section \ref{sec:samples}, we have used it in this section to compute marginals. Indeed, a major empirical observation made in Section \ref{sec:samples} suggests that computing the multiplicative correction $\tilde{Z}$ by sampling at the optimal value, $\lambda_*$, requires significantly fewer samples. In other words, the variance of sampling is minimal at the optimal value. This suggests that evaluating other expectations, particularly those corresponding to marginals, will also require fewer samples. Moreover, we also observed that the variance decreases with the model size. Based on all these observations, as well as the empirical advantage of evaluating marginals via FBP at $\lambda_*$ illustrated in Fig.~\ref{fig:cameraman}, we hypothesize that evaluating FBP at the optimal $\lambda_*$ will provide exact results not only for the partition function but also asymptotically (when the problem size increases) for marginals.

Adding to this discussion, observations reported in Section \ref{sec:concentration} on the concentration of $\lambda_*$ in large ensembles provide further support for the hypothesis that it is sufficient to estimate $\lambda_*$ for only one (representative) GM from the ensemble. This result can then be used not only to estimate the partition function in other GM models of the ensemble but also to compute marginals via FBP efficiently. Note that this hypothesis also emphasizes the utility of Algorithm \ref{alg:exact}, which allows us to estimate $\lambda_*$ from the exact $Z$.

\section{Conclusions and Path Forward}\label{sec:conclusions}

This manuscript suggests a new promising approach to evaluating inference in Ising Models. The approach consists in, first, solving a fractional variational problem via a distributed algorithm resulting in the fractional estimations for the partition function and marginal beliefs.  We then compute multiplicative correction to the fractional partition function by evaluating a well-defined expectation of the mean-field probability distribution both constructed explicitly from the marginal beliefs. We showed that the freedom in the fractional parameter is useful, e.g. for finding optimal value of the parameter, $\lambda_*$, where the multipicative correction is unity.  Our theory-validated experiments result in a number of interesting observation, such as strong suppression of fluctuations of $\lambda_*$ in large ensembles. We also demonstrate how the FBP approach can efficiently and more accurately solve the de-noising problem in machine learning compared to BP and TRW approaches. Finally, the combination of theoretical and empirical observations led us to hypothesize that the optimal value of $\lambda_*$, computed for a single graphical model within an ensemble, provides asymptotically exact values (in the limit of large and increasing system size) not only for the partition function but also for the marginals.

As a path forward, we envision extending this fractional approach along the following directions:
\begin{itemize}
\item  Proving or disproving the concentration conjecture and small number of samples conjecture, made informally in Section \ref{sec:concentration} and Section \ref{sec:samples}, respectively.

\item Generalizing the interpolation technique, e.g. building a scheme interpolating between TRW and Mean-Field (see e.g. Chapter 5 of \cite{wainwright_graphical_2007}). This will be of special interest for the case of the mixed ensembles which are generally out of reach of the fractional approach (between TRW and BP) presented in the manuscript.

\item Generalizing the interpolation technique to a more general class of Graphical Models. 

\end{itemize}

We also anticipate that all of these developments, presented in this manuscript and others to follow, will help to make variational GM techniques competitive with other, and admittedly more popular, methods of Machine Learning, such as Deep Learning (DL). We foresee that in the future, there will be more examples where variational GM techniques will be enhanced with automatic differentiation, e.g. in the spirit of \citep{lucibello_deep_2022}, and also integrated into modern Deep Learning protocols, e.g. as discussed in\citep{satorras_neural_2021}. This hybrid GM-DL approach is expected to be particularly beneficial and powerful in physics problems where we aim to learn reduced models with graphical structures prescribed by the underlying physics from data.

\newpage

\bibliography{FTRW}
\bibliographystyle{tmlr}

\newpage

\appendix

\section{Fractional Variational Formulation: Details}\label{sec:fractional-details}

\subsection{Lagrangian Formulation}\label{sec:Lagrangian}

Introducing Lagrangian multipliers associated with the linear constraints in Eqs.~(\ref{eq:D}a,\ref{eq:D}b) we arrive at the following Lagrangian reformulation of Eq.~(\ref{eq:Ff})
\begin{align} \label{eq:Lagr}
\bar{F}^{(\lambda)} & = \min\limits_{{\cal \bm B}\geq 0} \max\limits_{{\bm \eta}, {\bm \psi}} L^{(\lambda)}({\cal \bm B}; {\bm \eta},{\bm\psi}),\quad L^{(\lambda)}  \doteq F^{(\lambda)} ({\cal \bm B}) + \\ &  \sum\limits_{a\in\V; b\sim a}\sum\limits_{x_a} \eta_{b\to a}(x_a)\left(\sum\limits_{x_b} {\cal B}_{ab}(x_a,x_b)-{\cal B}_a(x_a)\right) +\sum\limits_{\{a,b\}\in\E}\psi_{ab}\left(1-\sum\limits_{x_a,x_b}{\cal B}_{ab}(x_a,x_b)\right), \nonumber 
\end{align}
where $L^{(\lambda)}({\cal \bm B}; {\bm \eta},{\bm\psi})$ is the (extended) Lagrangian dependent on both the primary variables (beliefs, ${\cal \bm B}$) and the newly introduced dual variables,
${\bm \eta}\doteq(\eta_{b\to a}(x_a)\in\mathbb{R}|\forall a\in\V,\ \forall b\sim a,\ \forall x_a=\pm 1)$ and ${\bm \psi}\doteq (\psi_a\in\mathbb{R}| \forall a\in \V)$.
The stationary point of the Lagrangian  (\ref{eq:Lagr}), assuming that it is unique, is defined by the following system of equations
\begin{align} \label{eq:Lagrangian-Bab}
    &  \forall \{a,b\}\in \E,\  \forall x_a, x_b=\pm 1:\quad 
    \frac{\delta L^{(\lambda)}({\cal \bm B})}{\delta {\cal B}_{ab}(x_a,x_b)}=0  \Rightarrow  E_{ab}(x_a,x_b)+\\ \nonumber & \hspace{4.5cm} \rho^{(\lambda)}_{ab}\left(\log\left({\cal B}^{(\lambda)}_{ab}(x_a,x_b)\right)+1\right)-\psi^{(\lambda)}_{ab} + \eta^{(\lambda)}_{b\to a}(x_a)+\eta^{(\lambda)}_{a\to b}(x_b)=0, 
    \\  & 
    \forall a\in \V,\ \forall x_a=\pm 1 :\  
    \frac{\delta L^{(\lambda)}({\cal \bm B})}{\delta {\cal B}_a(x_a)}=0  \! \Rightarrow \!  \left(\sum_{b\sim a}\rho^{(\lambda)}_{ab}-1\right)\left(\log {\cal B}^{(\lambda)}_a(x_a)+1\right)+\sum\limits_{b\sim a}\eta^{(\lambda)}_{b\to a}(x_a)=0, \label{eq:Lagrangian-Ba}
\end{align}
augmented with Eqs.~(\ref{eq:D}a,\ref{eq:D}b).
Eqs.~(\ref{eq:Lagrangian-Bab}) and Eqs.~(\ref{eq:Lagrangian-Ba}) result in the following expressions for the marginals in terms of the Lagrangian multipliers
\begin{align} \label{eq:Ba-sol}
& \forall a\in\V,\ \forall x_a=\pm 1:  
\quad {\cal B}^{(\lambda)}_a(x_a) \propto \exp\left(-\frac{\sum\limits_{b\sim a}\eta^{(\lambda)}_{b\to a}(x_a)}{\sum\limits_{b\sim a} \rho^{(\lambda)}_{ab}-1}\right),\\  \label{eq:Bab-sol}
 & \forall \{a,b\}\in\E,\ \forall x_a,x_b=\pm 1:\   
{\cal B}^{(\lambda)}_{ab}(x_a,x_b) \!\propto\! \exp\left(\!-\frac{E_{ab}(x_a,x_b)\!+\!\eta^{(\lambda)}_{b\to a}(x_a) \ \!\!\eta^{(\lambda)}_{a\to b}(x_b)}{\rho^{(\lambda)}_{ab}}\!\right).
\end{align}
Here in Eqs.~(\ref{eq:Lagrangian-Bab},\ref{eq:Lagrangian-Ba},\ref{eq:Ba-sol},\ref{eq:Bab-sol}) and below, the upper index $(\lambda)$ in ${\cal B}^{(\lambda)},\eta^{(\lambda)}$ and $\psi^{(\lambda)}$ variables indicates that the respective variables are optimal, i.e. $\text{argmax}$ and $\text{argmin}$, over respective optimizations in Eq.~(\ref{eq:Lagr}).

\subsection{Message Passing}\label{sec:MP}

We may also rewrite Eqs.~(\ref{eq:Ba-sol},\ref{eq:Bab-sol}) in terms of the so-called message (from node to node) variables. Then the marginal beliefs are expressed via the $\mu^{(\lambda)}$-messages according to 
\begin{align} \label{eq:mu}
& \forall a\in\V,\  \forall b\sim a:\ \mu^{(\lambda)}_{b\to a}(x_a)\doteq \exp\left(-\frac{\eta^{(\lambda)}_{b\to a}(x_a)}{\sum\limits_{b\sim a} \rho^{(\lambda)}_{ab}-1}\right),\\ & \label{eq:Ba-mu}
\forall a\in\V,\ \forall x_a=\pm 1: \ {\cal B}^{(\lambda)}_a(x_a) =\frac{\prod\limits_{b\sim a} \mu^{(\lambda)}_{b\to a}(x_a)}{\sum\limits_{{ x'}_a}\prod\limits_{b\sim a} \mu^{(\lambda)}_{b\to a}({ x'}_a)},\\ 
\label{eq:Bab-mu} &
    \forall \{a,b\}\in \E,\ \forall x_a,x_b=\pm 1: \  {\cal B}^{(\lambda)}_{ab}(x_a,x_b) \\ \nonumber & =\frac{
    \exp\left(-\frac{E_{ab}(x_a,x_b)}{\rho^{(\lambda)}_{ab}}\right)
\left(\mu^{(\lambda)}_{b\to a}(x_a)\right)^{\frac{\sum\limits_{c\sim a} \rho^{(\lambda)}_{ac}-1}{\rho^{(\lambda)}_{ab}}}\left(\mu^{(\lambda)}_{a\to b}(x_b)\right)^{\frac{\sum\limits_{c\sim b} \rho^{(\lambda)}_{bc}-1}{\rho^{(\lambda)}_{ab}}}}{\sum\limits_{{ x'}_a,{ x'}_b}\exp\left(-\frac{E_{ab}({ x'}_a,{ x'}_b)}{\rho^{(\lambda)}_{ab}}\right)
\left(\mu^{(\lambda)}_{b\to a}({ x'}_a)\right)^{\frac{\sum\limits_{c\sim a} \rho^{(\lambda)}_{ac}-1}{\rho^{(\lambda)}_{ab}}}\left(\mu^{(\lambda)}_{a\to b}({ x'}_b)\right)^{\frac{\sum\limits_{c\sim b} \rho^{(\lambda)}_{bc}-1}{\rho^{(\lambda)}_{ab}}}},
\end{align}

and the Fractional Belief Propagation (FBP) equations, expressing relations between pairwise and singleton marginals become:
\begin{align}\label{eq:FMP}
&\forall a\in\V,\ \forall b\sim a,\ \forall x_a=\pm 1:\quad 
{\cal B}^{(\lambda)}_a(x_a) \propto \prod\limits_{b\sim a} \mu^{(\lambda)}_{b\to a}(x_a) \\ \nonumber & \propto \sum\limits_{x_b}
    \exp\left(-\frac{E_{ab}(x_a,x_b)}{\rho^{(\lambda)}_{ab}}\right)
\left(\mu^{(\lambda)}_{b\to a}(x_a)\right)^{\frac{\sum\limits_{c\sim a} \rho^{(\lambda)}_{ac}-1}{\rho^{(\lambda)}_{ab}}}\left(\mu^{(\lambda)}_{a\to b}(x_b)\right)^{\frac{\sum\limits_{c\sim b} \rho^{(\lambda)}_{bc}-1}{\rho^{(\lambda)}_{ab}}}\propto \sum\limits_{x_b}{\cal B}^{(\lambda)}_{ab}(x_a,x_b).
\end{align}

Note (on a tangent),  that the $\mu^{(\lambda)}$-(message) variables introduced here are related but not equivalent to the $M^{(\lambda)}$-messages which can also be seen used in the BP-literature, see e.g. Section 4.1.3 of \citep{wainwright_graphical_2007}. Specifically in the case of BP, i.e. when
$\rho^{(\lambda)}_{ab}=1$, relation between $\mu^{(\lambda)}$ and $M^{(\lambda)}$ variables is as follows,
$(\mu^{(\lambda)}_{b\to a}(x_a))^{d_a-1}=\prod_{c\sim a; c\neq b} M^{(\lambda)}_{c\to a}(x_a)$.

\section{Proof of Theorem \ref{th:monotone}}\label{sec:monotone-proof}

Let us evaluate the derivative of the fractional free energy (\ref{eq:Ff}) over $\lambda$ explicitly
\begin{align*}
\frac{d}{d\lambda} \bar{F}^{(\lambda)} &=  \frac{d}{d\lambda} F^{(\lambda)} \left({\cal \bm B}^{(\lambda)}\right)=\sum\limits_{\{a,b\}}\sum\limits_{x_a,x_b} \frac{\partial F^{(\lambda)} \left({\cal \bm B}^{(\lambda)}\right)}{\partial \mathcal{B}_{ab}^{(\lambda)}(x_a,x_b)} \frac{d\mathcal{B}_{ab}^{(\lambda)}(x_a,x_b)}{d\lambda}\\ & + \sum\limits_{a}\sum\limits_{x_a} \frac{\partial F^{(\lambda)} \left({\cal \bm B}^{(\lambda)}\right)}{\partial \mathcal{B}_{a}^{(\lambda)}(x_a)} \frac{d\mathcal{B}_{a}^{(\lambda)}(x_a)}{d\lambda}-
\sum\limits_{\{a,b\}} \frac{\partial H^{(\lambda)} ({\cal \bm B}^{(\lambda)})}{\partial \rho^{(\lambda)}_{ab}}\frac{d\rho^{(\lambda)}_{ab}}{d\lambda}.
\end{align*}
Taking into account the conditions of stationarity of the fractional free energy, tracking explicit dependencies of the fractional entropy on $\rho^{(\lambda)}_{ab}$, and thus on $\lambda$,
we arrive at 
\begin{align*}
\forall \{a,b\}:\quad & \frac{\partial F^{(\lambda)} \left({\cal \bm B}^{(\lambda)}\right)}{\partial \mathcal{B}_{ab}^{(\lambda)}(x_a,x_b)}=0;\quad \forall a:\quad \frac{\partial F^{(\lambda)} \left({\cal \bm B}^{(\lambda)}\right)}{\partial \mathcal{B}_{a}^{(\lambda)}(x_a)}=0;\\
\frac{\partial H^{(\lambda)} ({\cal \bm B}^{(\lambda)})}{\partial \rho^{(\lambda)}_{ab}} & =
-\sum\limits_{x_a,x_b=\pm 1} {\cal B}_{ab}^{(\lambda)}(x_a,x_b)\log {\cal B}_{ab}^{(\lambda)}(x_a,x_b) + \sum\limits_{x_a=\pm 1}{\cal B}_{a}^{(\lambda)}(x_a)\log {\cal B}_{a}^{(\lambda)}(x_a)\\ & + \sum\limits_{x_b=\pm 1}{\cal B}_{b}^{(\lambda)}(x_b)\log {\cal B}_{b}^{(\lambda)}(x_b)
= -I^{(\lambda)}_{ab},
\end{align*}
where the newly introduced $I^{(\lambda)}_{ab}$ is nothing but the pairwise mutual information defined according to ${\cal \bm B}^{(\lambda)}$. Notice that $I^{(\lambda)}_{ab}\geq 0$. Since, according to the theorem's assumption ${\cal \bm B}^{(\lambda)}$ is differentiable in $\lambda$; $d\rho^{(\lambda)}_{ab}/d\lambda=1-\rho_{ab}\geq 0$, and then summarizing all of the above 
we derive 
\begin{gather}\label{eq:F'}
\frac{d}{d\lambda} \bar{F}^{(\lambda)} =  -\sum_{\{a,b\}} (1-\rho_{ab}) I^{(\lambda)}_{ab}\leq 0,
\end{gather}
thus concluding the proof of both continuity (the derivative is bounded) and monotonicity (the derivative is negative).

\section{Proof of Theorem \ref{th:exact}}\label{sec:exact}

Consistently with Eq.~(\ref{eq:joint-belief}), Eqs.~(\ref{eq:Ba-mu},\ref{eq:Bab-mu}) allow us to rewrite the joint probability distribution in terms of the optimal beliefs which solve the fractional Eqs.~(\ref{eq:FMP}) 
\begin{align} \nonumber 
& p({\bm x})\!=\!  Z^{-1}\!\prod\limits_{\{a,b\}\in\E}\!\Bigg(\sum\limits_{x_a,x_b}\exp\!\left(-\frac{E_{ab}(x_a,x_b)}{\rho^{(\lambda)}_{ab}}\right)\!
\left(\mu^{(\lambda)}_{b\to a}(x_a)\right)^{\frac{\sum\limits_{c\sim a} \rho^{(\lambda)}_{ac}-1}{\rho^{(\lambda)}_{ab}}}\left(\mu^{(\lambda)}_{a\to b}(x_b)\right)^{\frac{\sum\limits_{c\sim b} \rho^{(\lambda)}_{bc}-1}{\rho^{(\lambda)}_{ab}}}\Bigg)^{\rho^{(\lambda)}_{ab}}\!\!\! \times \\  & \prod\limits_{a\in\V} \left(\sum\limits_{x_a} \prod\limits_{b\sim a}\mu^{(\lambda)}_{b\to a}(x_a)\right)^{1-\sum\limits_{c\sim a} \rho^{(\lambda)}_{ac}}
\frac{\prod\limits_{\{a,b\}\in{\cal E}} \left({\cal B}^{(\lambda)}_{ab}(x_a,x_b)\right)^{\rho^{(\lambda)}_{ab}} }{\prod\limits_{a\in\V} \left({\cal B}^{(\lambda)}_a(x_a)\right)^{\sum\limits_{c\sim a}\rho^{(\lambda)}_{ac}-1}}. \label{eq:p-via-B}
\end{align}
Normalization condition requires the sum on the right hand side of Eq.~(\ref{eq:p-via-B}) to return $1$, results in the desired statement, i.e. Eqs.~(\ref{eq:Z-Zf},\ref{eq:calZf}).

\section{More Figures from Numerical Experiments}
\label{sec:more-figures}

Fig.~\ref{fig:variation}, mentioned in Section \ref{sec:concentration} shows $\lambda_*$ for a number of instances drawn from the respective ensembles of the Ising model (over planar and complete graphs). We observe that in the planar case, $\lambda_* \in [0.75, 0.95]$, while in the case of the complete graph, $\lambda_* \in [0.05, 0.15]$.

Fig.~\ref{fig:anti}, mentioned in Section \ref{sec:mixed}, shows results for the mixed case when the pair-wise interaction can vary in sign from edge to edge. In this mixed case, as seen in the presented examples, we can not guarantee that BP provides a lower bound on the partition function, and thus $\lambda_*$ may or may not be identified within the $[0,1]$ interval.

\begin{figure*}[h]
	\centering
	\subfigure[{}]{
		\centering
 		\includegraphics[scale=0.38]{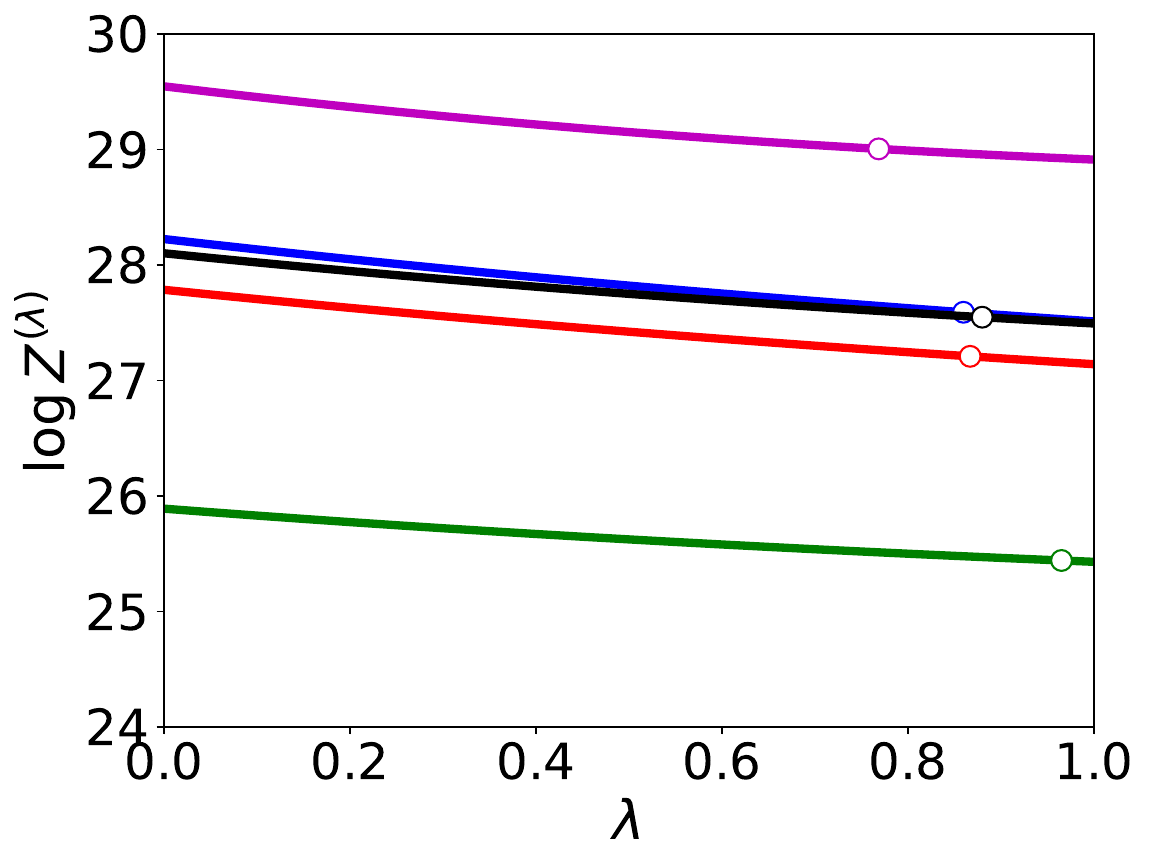}}
	\hspace{3mm}
	\subfigure[{}]{
		\includegraphics[scale=0.38]{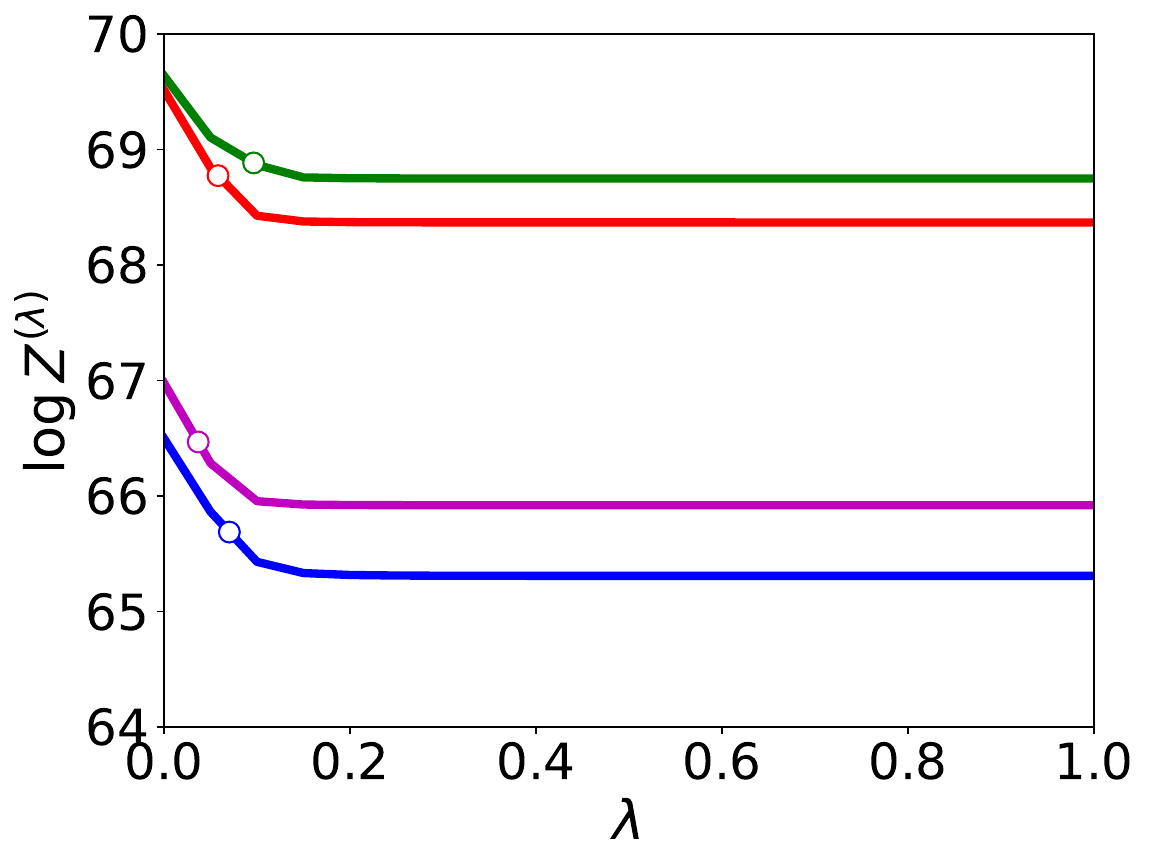}}
	\caption{$F^{(\lambda)}$ vs $\lambda$ for a number of instances (shown in different colors) drawn for the Ising model ensembles over, (a) $3\times 3$ grid, and (b) $K_9$ graph, where elements of ${\bm J}$ and ${\bm h}$ are i.i.d. from $\mathcal{U}(0,1)$. Circles mark respective exact values, $\lambda_*$.}
	\label{fig:variation} 
\end{figure*}

\begin{figure*}[h]
	\centering
	\subfigure[{}]{
		\centering
		\includegraphics[scale=0.38]{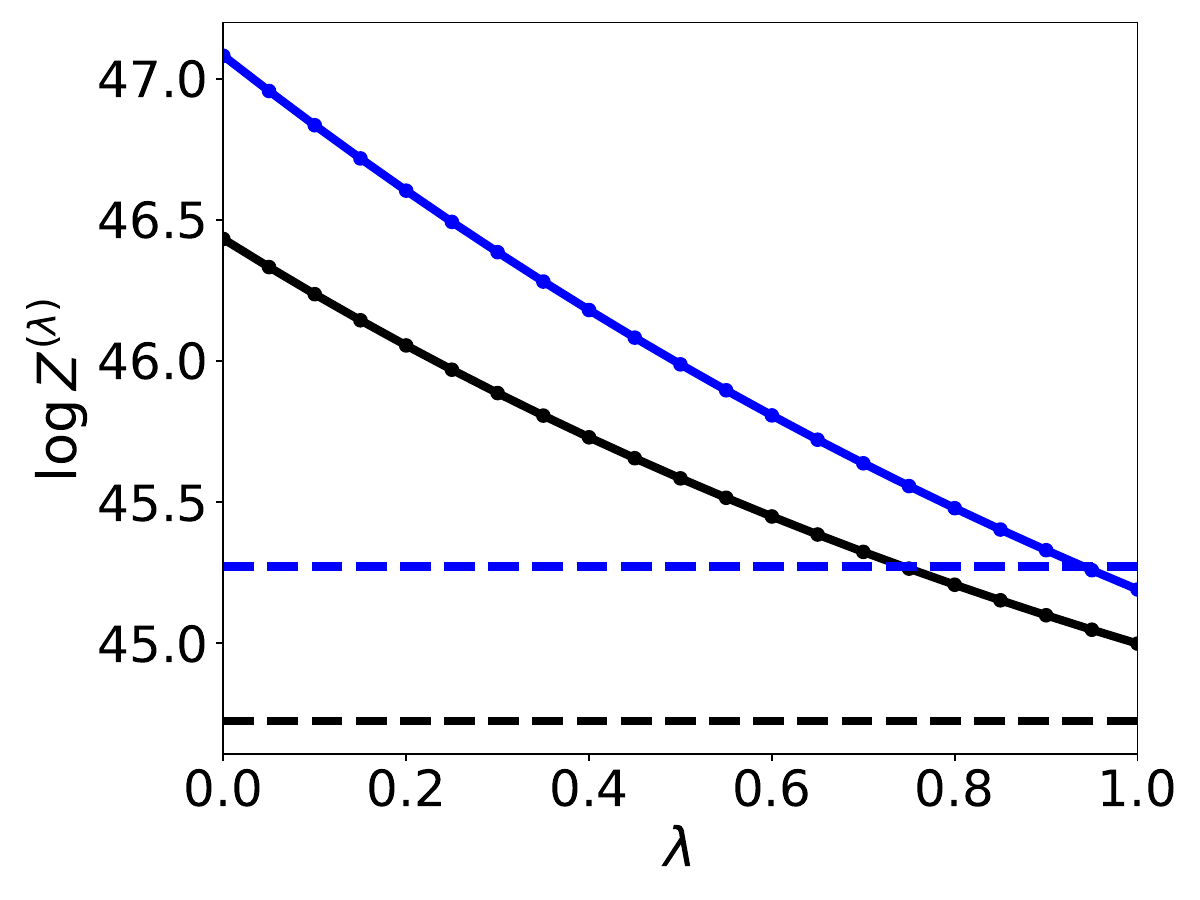}}
	\subfigure[{}]{
		\includegraphics[scale=0.38]{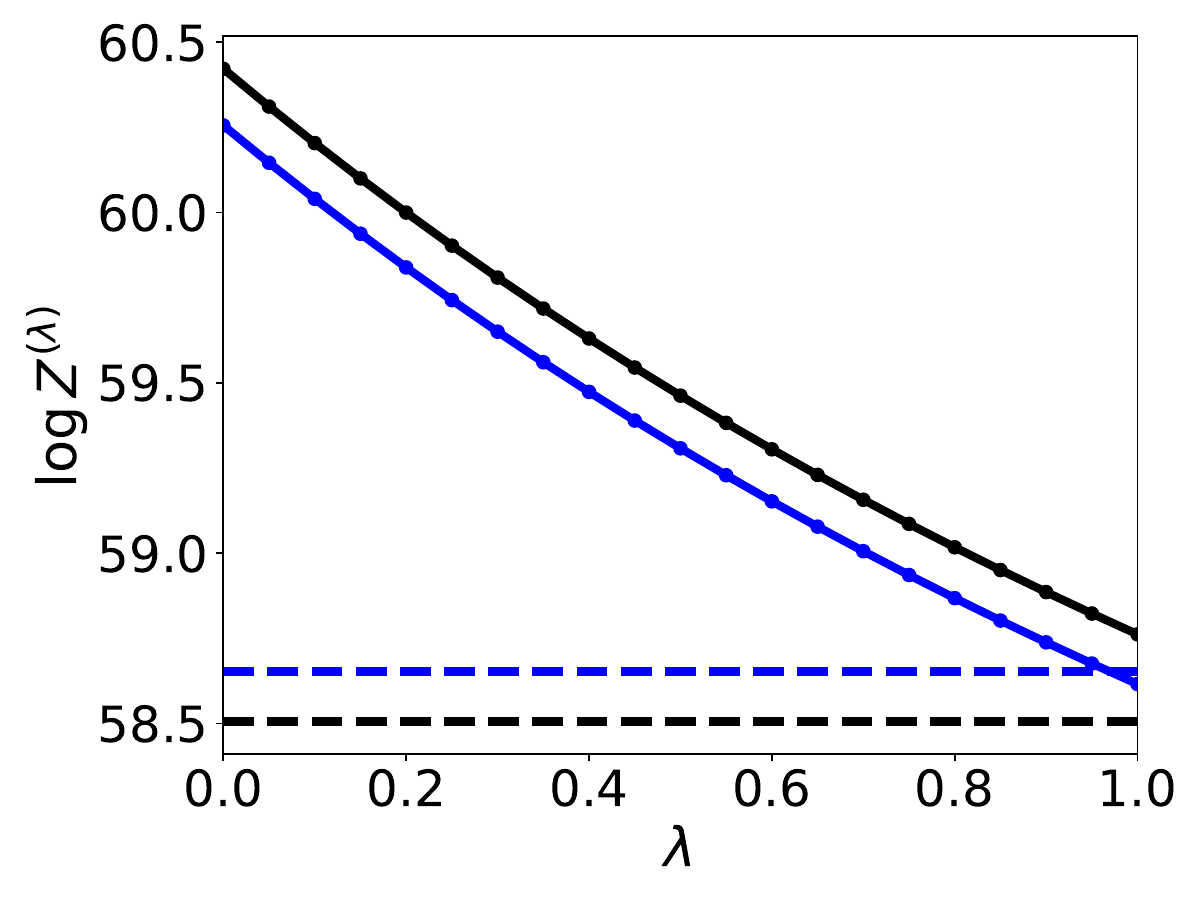}}
	\caption{ Two different random instance of $4\times4$ Isisng Model with (a) $J\sim \mathcal{U}(-1,1)$ and $h\sim\mathcal{U}(-1,1)$ (b) $J\sim \mathcal{U}(-1,1)$, $h = 0$. Dashed line shows exact value of partition functions for the corresponding curve. }
	\vspace*{0pt}
	\label{fig:anti} 
\end{figure*}

\end{document}